\numberwithin{equation}{section}
\newcommand{\bm}[1]{\mbox{\boldmath$#1$}}
\def\be{\begin{equation}}
\def\ee{\end{equation}}
\def\bal{\begin{aligned}}
\def\eal{\end{aligned}}
\def\bes{\begin{equation*}}
\def\ees{\end{equation*}}
\newtheorem{theorem}{Theorem}[section]
\theoremstyle{remark}
\definecolor{codegreen}{rgb}{0,0.6,0}
\definecolor{codegray}{rgb}{0.5,0.5,0.5}
\definecolor{codepurple}{rgb}{0.58,0,0.82}
\definecolor{backcolour}{rgb}{0.95,0.95,0.92}
\lstdefinestyle{mypython}{
    language=Python,
    basicstyle=\ttfamily\small,
    keywordstyle=\color{blue}\bfseries,
    stringstyle=\color{red},
    commentstyle=\color{green!50!black},
    numbers=left,
    numberstyle=\tiny\color{gray},
    stepnumber=1,
    showstringspaces=false,
    breaklines=true,
    frame=single,
    tabsize=4,
    morekeywords={self,Variable,geometry, TimeDomain,
    icbc,IC,grad, jacobian,torch,sigmoid,PointSetBC,
    data,PDE,nn,FNN,Model,compile,train,predict} 
}
\renewenvironment{abstract}{%
  \if@twocolumn
    \section*{\abstractname}%
  \else
    \small
    \begin{trivlist}%
      \item[\hskip \labelsep \bfseries \abstractname] 
  \fi}
  {\if@twocolumn\else\end{trivlist}\fi}
\begin{document}
\title{\textbf{PBPK-iPINNs: Inverse Physics-Informed Neural Networks for Physiologically Based Pharmacokinetic Brain Models}}
\author[]{Charuka D. Wickramasinghe$^{\dagger}$, Krishanthi C. Weerasinghe$^{\ddagger}$, Pradeep K. Ranaweera$^{\ddagger}$, and Nelum S.S.M.  Hapuhinna$^{\ddagger}$}
\date{}
\maketitle
\begin{abstract} Physics-Informed Neural Networks (PINNs) integrate machine learning with differential equations to solve forward and inverse problems while ensuring that predictions adhere to physical laws. Physiologically based pharmacokinetic (PBPK) modeling advances beyond classical compartmental approaches by employing a mechanistic, physiology-focused framework. Such models involve many unknown parameters that are difficult to measure directly in humans due to ethical and practical constraints. PBPK models are constructed as systems of ordinary differential equations (ODEs) and these parametric ODEs are often stiff, and traditional numerical and statistical methods frequently fail to converge. In this study, we consider a permeability-limited, four-compartment PBPK brain model that mimics human brain functionality in drug delivery. We introduce PBPK-iPINN, a method for estimating drug-specific or patient-specific parameters and drug concentration profiles using inverse PINNs. We also conducted parameter identifiability analysis to determines whether the parameters can be uniquely and reliably estimated from the available data.  We demonstrate that, for the inverse problem to converge to the correct solution, the components of the loss function (data loss, initial condition loss, and residual loss) must be appropriately weighted, and the hyperparameters including the number of layers and neurons, activation functions, learning rate, optimizer, and collocation points must be carefully tuned. The performance of the PBPK-iPINN approach is then compared with established numerical and statistical methods. PBPK-iPINN achieved parameter errors on the order of $10^{-8}$ for volumes, outperforming traditional methods. Accurate parameter estimation yields precise drug concentration-time profiles, which in turn enable the calculation of pharmacokinetic metrics. These metrics support drug developers and clinicians in designing and optimizing therapies for brain cancer.
\vskip1em \noindent \textbf{Key words.} Physics-informed neural networks, physiologically based pharmacokinetic, ordinary differential equations, brain tumors

\vskip1em \noindent \textbf{MSC codes.} 65L04, 65L09, 92B20 

\end{abstract}

\renewcommand{\thefootnote}{\fnsymbol{footnote}}
\footnotetext[2]{Department of Oncology, Wayne State University, MI (gi6036@wayne.edu).}
\footnotetext[3]{Department of Chemistry, Siena Heights University, MI (kweerasi@sienaheights.edu).}
\footnotetext[3]{Department of Engineering, Siena Heights University, MI (pranawee@sienaheights.edu).}
\footnotetext[3]{Department of Mathematics and Statistics, Northern Kentucky University, KY (hapuhinnan1@nku.edu).}
\renewcommand{\thefootnote}{\arabic{footnote}}

\section{Introduction}  Solving ordinary differential equations (ODEs) and/or partial differential equations (PDEs) analytically is rarely feasible. Even when exact solutions exist, interpreting their behavior can be difficult. To address this, numerical methods based on discretization are commonly employed\cite{atkinson2009numerical,cheney1998numerical,teschl2021ordinary}. Although numerical methods for solving ODEs (Euler’s method, Runge-Kutta methods, and etc) and PDEs (finite element method, finite difference method, and etc)  have made great progress in simulating complex problems, they still face major challenges. These include difficulty in directly using noisy data, the complexity of mesh generation \cite{li2025analysis, wickramasinghe2024graded, wickramasinghe2024numerical,wickramasinghe2022c0}, and the inability to efficiently handle high-dimensional parameterized differential equations. Inverse problems with unknown physics are especially costly to solve, often demanding separate formulations and specialized code. Although parameter estimation for differential equation systems has been extensively studied \cite{liang2008parameter, calver2019parameter, kaschek2012variational}, our focus is on estimating drug-specific and/or system-specific parameters of  physiologically based pharmacokinetic brain compartmental models.

While machine learning offers a promising alternative, training deep neural networks typically requires large datasets which are often unavailable in scientific applications. A more practical approach is to train such networks using additional information derived from enforcing physical laws. The figure (\ref{dp1}) provides a schematic overview of three types of physical problems along with their corresponding available data. Purely data-driven models may fit observations very well, but predictions may be physically inconsistent or implausible, owing to extrapolation or observational biases that may lead to poor generalization performance.
To this end, physics-informed learning is needed. This refers to using prior knowledge whether from observations, experiments, physics, or mathematics to enhance the performance of a learning algorithm.

In recent years, physics-informed neural networks (PINNs) \cite{raissi2019physics, hariri2025physics, karniadakis2021physics} have seen remarkable growth. These methods effectively combine the fundamental physical principles expressed through differential equations. PINNs are well suited for addressing inverse problems, either by serving as surrogate models in combination with standard parameter estimation methods (Bayesian or deterministic) or by directly performing the estimation as independent tools. Inverse problems are inherently more complex than forward problems due to their potentially ill-posed nature, where multiple solutions may exist or no solution at all. Challenges often arise in data-scarce regimes, irregular geometries, missing measurements, or from uncertainties inherent in the model. Advanced PINN techniques have been developed to address these difficulties. For a recent review, see \cite{yang2021b, gusmao2024maximum, difonzo2024physics} and references therein.

\begin{figure}[H]
\centering
\includegraphics[width=10cm]{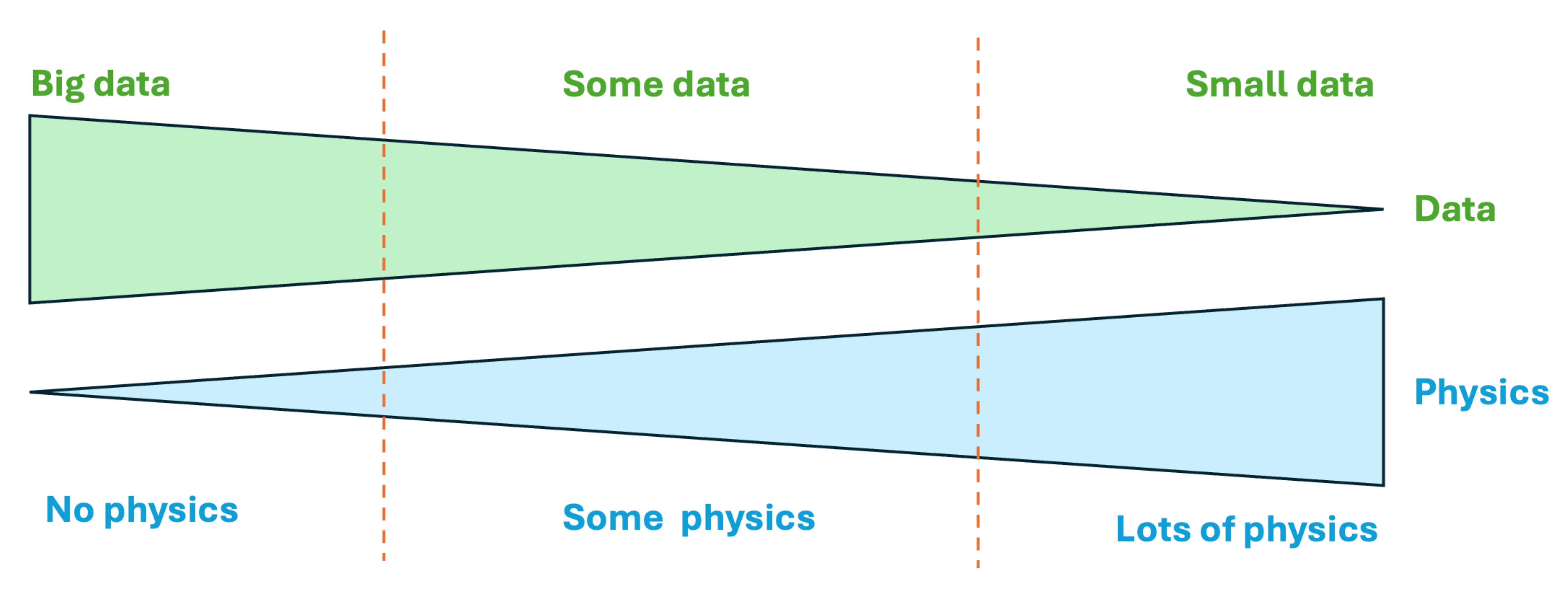}
\begin{tiny}
\caption{Schematically illustrates three possible categories of physical problems and associated available data.}\label{dp1}
\end{tiny}
\end{figure}

Parameter identifiability is a fundamental concept in system modeling that determines whether model parameters can be uniquely estimated from available data. Using the four-compartment brain model, we assess the structural identifiability of the model parameters. When parameters are found to be structurally unidentifiable, we fix selected parameters, leveraging the richness of the available dataset. Subsequently, using the parameters inferred through the PBPK-iPINN framework, we perform a practical identifiability analysis to evaluate the quality and reliability of the parameter estimates

In this work, we present an inverse physics-informed neural network (iPINN) to estimate unknown parameters of PBPK brain compartment models and to obtain the drug concentration profiles in different regions of the human brain which gives the best fit to a set of experimental data. These physiologically based pharmacokinetic models consider various factors, such as the drug's ability to cross the blood-brain barrier, its rate of clearance from the body, and how it interacts with tissues \cite{zhuang2016pbpk, gaohua2016development}. These models consist of numerous system-specific(i.e., human body) and drug specific  parameters. While experimental methods for estimating these parameters are well-established and can provide valuable insights, several challenges remain \cite{Paper1}. These include inter-individual variability, limitations in analytical techniques, complexity of drug metabolism, drug-drug interactions, obtaining sufficient and representative blood and tissue samples, ethical and practical constraints, and environmental factors. As a result, there is a growing interest in leveraging mathematical modeling, statistical inference, and machine learning approaches to estimate these parameters more robustly and efficiently.

Due to the complexity of the human body, drug kinetics are often modeled using one or more interconnected compartments, each representing a group of tissues with similar blood flow. These compartments are conceptual rather than actual anatomical or physiological regions, and the drug is assumed to be uniformly distributed within each \cite{chaudhry2016pharmacokinetic}. Compartment modeling in the field of pharmacokinetics and pharmacodynamics (PKPD), uses a mathematical approach to describe how drugs are distributed and eliminated in the body through a system of ordinary differential equations, incorporating drug-specific and system-specific parameters \cite{book1}.  Accurate parameter estimation is crucial for modeling drug behavior in the body to ensure the effectiveness and safety of medications while inaccurate estimates can weaken pharmacokinetic predictions, slowing down drug development and clinical practices. 

The model problem used in this study to evaluate the predictive capacity of the PBPK-iPINN approach is adopted from the 4-compartment brain model included in Simcyp simulator (Certara Inc), a commercial software platform considered a gold standard for physiologically based pharmacokinetic (PBPK) modeling \cite{jamei2009simcyp, gaohua2016development}. For additional physiologically based pharmacokinetic brain models, we refer the reader to \cite{li2025mechanistic, wickramasinghe2025spatialcns, li2024mechanistic, wickramasinghe2025data}. The exiting commercial software packages for brain model simulations use traditional mathematical and statistical methodologies.  While PINNs have been widely applied to solve systems of ODEs, research on their use for inverse problems in physiologically based pharmacokinetic compartmental drug delivery modeling remains limited. By leveraging the complete physical information as prior knowledge, PINNs can be effectively trained using minimal or even no labeled data to serve as surrogate models for accurate solutions where the loss function measures the difference between the PINN outputs and the data. 

\begin{figure}[H]
\centering
\includegraphics[width=14cm]{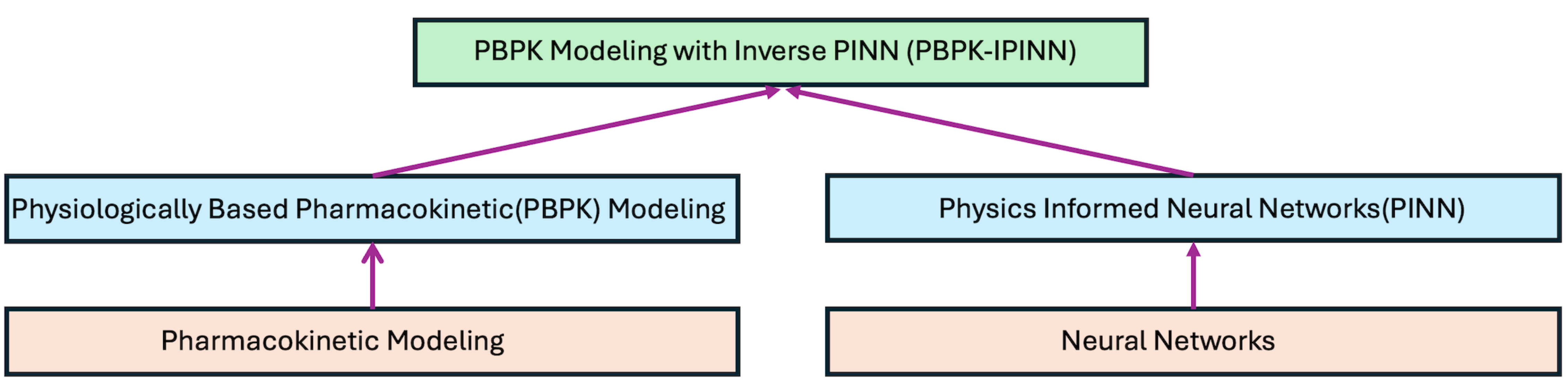}
\begin{tiny}
\caption{Advancing pharmacokinetic modeling with neural network.}\label{dp2}
\end{tiny}
\end{figure} 

The impact of our research is summarized in Figure (\ref{dp2}). While pharmacokinetic modeling has evolved to physiologically based pharmacokinetic approaches, and neural networks have advanced toward physics-informed neural networks, our work integrates these two developments by introducing an inverse physics-informed neural network framework to further advance PBPK modeling, thereby pushing PBPK modeling one step further.

Accurate parameter estimation of the model helps determine \( C_{\text{max}} \), \( T_{\text{max}} \), AUC, and half-life, which are crucial for understanding a drug's absorption, distribution, metabolism, and excretion. Here, \( C_{\text{max}} \) (the maximum concentration of a drug in the bloodstream) indicates the peak effectiveness and potential for side effects, while \( T_{\text{max}} \) (the time it takes to reach this peak) helps assess how quickly a drug acts. AUC (the area under the concentration-time curve) reflects the total drug exposure over time and is essential for evaluating the drug's bioavailability and therapeutic potential. Half-life (the time required for the drug concentration to reduce by half) informs dosing schedules and helps predict how long a drug will exert its effects. Thus, PBPK-iPINN approach as a promising tool for estimating drug parameters and predict the drug concentration profiles could assist drug developers and healthcare providers in developing more effective drugs and optimizing the use of current therapies to treat brain cancer.

The paper is organized as follows. In section 2 we present the system of ordinary differential equations and the schematic illustration of the 4-compartment brain model that mimic the drug transport in different regions of the human brain and it's existence and uniqueness of the forward problem. We further conduct  structural
and practical identifiability analyses to assess whether the model parameters can be
uniquely determined.  Section 3 is devoted to explain the inverse physics-informed neural network architecture for the four compartment brain model and a step by step guidance to implement a Python code of the PINN algorithm used through DeepXDE  library \cite{lu2021deepxde}. In Section 4, we present numerical results showing the predictive power of PBPK-iPINN and its validation. Finally, a conclusion is drawn that highlights impact of PBPK-iPINN in Section 5.

\section{Physiologically-Based Four Compartment Brain Model}\label{sec2}

The 4-compartment permeability limited brain (4Brain) model consisting of brain blood, brain mass, cranial and spinal cerebrospinal fluid (CSF) compartments has been developed and incorporated into a whole body physiologically-based pharmacokinetic (PBPK) model within the Simcyp Simulator. There are two approaches to understand the absorption, distribution, metabolism, and excretion (ADME) of drugs in pharmacology called compartment and non compartment modeling. In this study we used compartment modeling which assumes that the body can be represented by a series of interconnected compartments (e.g., central and peripheral compartments) where the drug is distributed. These compartments represent different tissues or groups of tissues with similar drug distribution characteristics. The figure (\ref{brain4}) illustrate the how drugs are  distributed in and out of each compartment of the brain.

\subsection{System of Differential Equations}

In compartmental pharmacokinetic modeling, the body is divided into compartments (e.g., blood, tissues) where drug movement follows mass balance principle \cite{beumer2006mass}.
Mass balance ensures we keep track of every bit of the drug it doesn't disappear or show up out of nowhere unless we include a way for that to happen (like the body breaking it down). This means if drug leaves one compartment, it must enter another (or be eliminated). The differential equations model how drug amounts change over time. For any compartment with drug amount \( Y(t) \), the mass balance principle states:
\begin{equation}
    \frac{dY}{dt} = \sum (\text{Input Rates}) - \sum (\text{Output Rates})
\end{equation}
where, Inputs (positive terms) represent drug entering the compartment (e.g., absorption, infusion, transfer from another compartment) while Outputs (negative terms) represent drug leaving the compartment (e.g., elimination, distribution to other compartments). Thus, following the mass conservation law the drug disposition in 4 compartment brain model can be described by the following system of differential equations \cite{gaohua2016development}.  The equations (\ref{eqn1}), (\ref{eqn2}), (\ref{eqn3}), and (\ref{eqn4}) represent the rate of change of drug concentration of brain blood ($C_{bb}$), brain mass ($C_{bm}$), cranial CSF ($C_{ccsf}$), and spinal CSF ($C_{scsf}$) respectively where $C_{art}$ denotes the arterial blood concentration(mg/L) which is an exogenous input to the system.

\begin{figure}[]
\centering
\includegraphics[width=12cm]{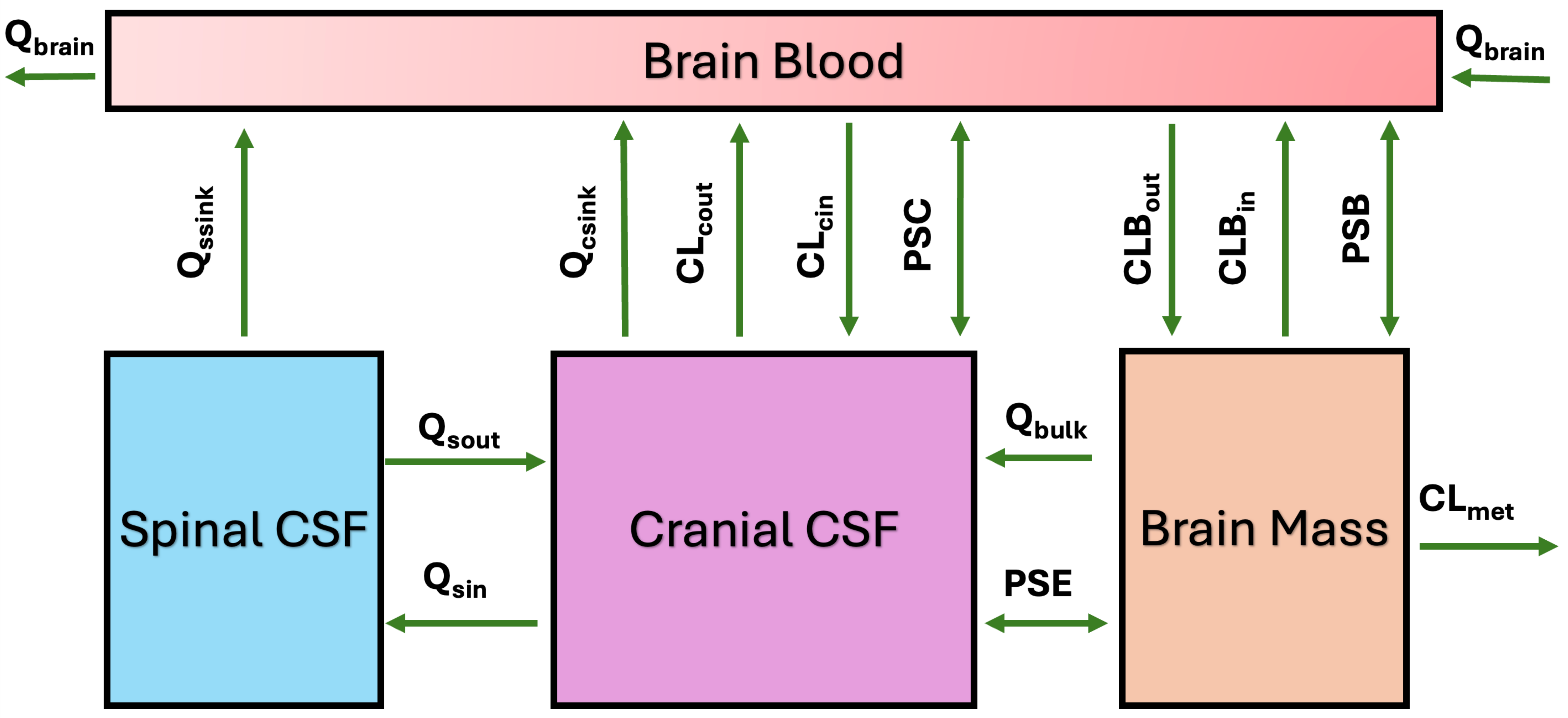}
\begin{tiny}
\caption{Schematic illustration of the 4 compartment brain model.}\label{brain4}
\end{tiny}
\end{figure}

\noindent \textbf{\textit{Brain blood compartment:}}

\begin{equation} \label{eqn1}
\begin{split}
V_{bb} \frac{dC_{bb}}{dt} & = Q_{brain}(C_{art}-C_{bb}) + PSB(\lambda_{bm}fu_{bm}C_{bm}-\lambda_{bb}fu_{bb}C_{bb})   \\ & +  CLB_{in}fu_{bb}C_{bb} + CLB_{out}fu_{bm}C_{bm} 
\\ & + PSC(\lambda_{ccsf}fu_{ccsf}C_{ccsf}-\lambda_{bb}fu_{bb}C_{bb}) - CLC_{in}fu_{bb}C_{bb} 
\\ & + CLC_{out}fu_{ccsf}C_{ccsf} + Q_{csink}C_{ccsf} 
+ Q_{ssink}C_{scsf}
\end{split}
\end{equation}
\noindent \textbf {\textit{Brain mass compartment:}}
\begin{equation} \label{eqn2}
\begin{split}
\begin{aligned}
 V_{bm} \frac{dC_{bm}}{dt} &= PSB(\lambda_{bb}fu_{bb}C_{bb}-\lambda_{bm}fu_{bm}C_{bm})  +CLB_{in}fu_{bb}C_{bb} \\& - CLB_{out}fu_{bm}C_{bm} - Q_{bulk}C_{bm}\\
&  + PSE(\lambda_{ccsf}fu_{ccsf}C_{ccsf}-\lambda_{bm}fu_{bm}C_{bm})  - CL_{met}C_{bm} 
\end{aligned}
\end{split}
\end{equation}
\noindent \textbf {\textit{Cranial CSF compartment:}}
\begin{equation} \label{eqn3}
\begin{split}
\begin{aligned}
 \qquad V_{ccsf} \frac{dC_{ccsf}}{dt} 
 &= PSC(\lambda_{bb}fu_{bb}C_{bb}-\lambda_{ccsf}fu_{ccsf}C_{ccsf}) 
+CLC_{in}fu_{bb}C_{bb}  \\&  - CLC_{out}fu_{ccsf}C_{ccsf}  - Q_{sout}C_{scsf} 
 \\&  + PSE(\lambda_{bm}fu_{bm}C_{bm}-\lambda_{ccsf}fu_{ccsf}C_{ccsf})  - Q_{sin}C_{ccsf} 
 \\& - Q_{csink}C_{ccsf}
\end{aligned}
\end{split}
\end{equation}
\noindent \textbf {\textit{Spinal CSF compartment:}}
\begin{equation} \label{eqn4}
\begin{split}
\begin{aligned}
\qquad V_{scsf} \frac{dC_{scsf}}{dt} &= Q_{sin}C_{ccsf} - Q_{sout}C_{scsf} - Q_{ssink}C_{scsf} \hspace{8cm}
\end{aligned}
\end{split}
\end{equation}
The system specific parameters  were derived from the existing Simcyp virtual cancer patient population and are listed in the Table (\ref{tab:1}). These parameters are related to the physiological and biochemical characteristics of the human body. They influence how the body handles the drug, and they can vary from person to person. The drug-specific parameters can be found in the literature \cite{gaohua2016development} and from the Simcyp simulator. Drug specific parameters are listed in Table (\ref{tab:2}). These parameters are intrinsic to the drug itself and dictate how it behaves in the body. The drug specific parameters are specific to an oral dose of 10 mg of abemaciclib drug which is a targeted cancer therapy used to treat certain types of cancer.  They are crucial in designing optimal dosing regimens and understanding drug interactions. These parameter values are considered as reference values to validate our parameter estimation approach.

\begin{table}[htbp]
\centering
\caption{System specific parameters for abemaciclib in 4-compartment brain model}
\label{tab:1}
\footnotesize
\setlength{\tabcolsep}{10pt}
\renewcommand{\arraystretch}{1.3}
\begin{tabular}{|>{\RaggedRight}p{0.18\linewidth}|>{\RaggedRight}p{0.5\linewidth}|c|}
\hline
\textbf{Parameter} & \textbf{Description (unit)} & \textbf{Value} \\
\hline
$V_{bb}$ & Brain blood volume (L) & 0.064952435 \\
\hline
$V_{bm}$ & Brain mass volume (L) & 1.104115461 \\
\hline
$V_{ccsf}$ & Cranial CSF volume (L) & 0.103984624 \\
\hline
$V_{scsf}$ & Spinal CSF volume (L) & 0.025996156 \\
\hline
$Q_{brain}$ & Blood/CSF flow (L/h) & 38.0 \\
\hline
$Q_{csink}$ & Cranial CSF absorption rate (L/h) & 0.01277633 \\
\hline
$Q_{ssink}$ & Spinal CSF absorption rate (L/h) & 0.007761342 \\
\hline
$Q_{bulkBC}$ & Bulk flow from brain mass to cranial CSF (L/h) & 0.005164106 \\
\hline
$Q_{bulkCB}$ & Bulk flow from cranial CSF to brain mass (L/h) & 0.005164106 \\
\hline
$Q_{sout}$ & CSF flow: spinal to cranial CSF (L/h) & 0.007489995 \\
\hline
$Q_{sin}$ & CSF flow: cranial to spinal CSF (L/h) & 0.015251337 \\
\hline
$PSB$ & Passive permeability-surface area product at the BBB (L/h) & 135.0 \\
\hline
$PSC$ & Passive permeability-surface area product at the blood–cranial CSF barrier (L/h) & 67.5 \\
\hline
$PSE$ & Passive permeability-surface area of brain-CSF barrier (L/h) & 300.0 \\
\hline
\end{tabular}
\end{table}

\begin{table}[htbp]
\centering
\caption{Drug-specific parameters for abemaciclib in 4-compartment brain model}
\label{tab:2}
\footnotesize
\setlength{\tabcolsep}{13pt} 
\renewcommand{\arraystretch}{1.4}
\begin{tabular}{|>{\RaggedRight}p{0.18\linewidth}|>{\RaggedRight}p{0.5\linewidth}|c|}
\hline
\textbf{Parameter} & \textbf{Description (unit)} & \textbf{Value} \\
\hline
$CLB_{in}$ & Clearance of active uptake transporter on the BBB (L/h) & 0.0 \\
\hline
$CLB_{out}$ & Clearance of active efflux transporter on the BBB (L/h) & 110.0 \\
\hline
$CLC_{in}$ & BCSFB uptake transporter (L/h) & 11.9 \\
\hline
$CLC_{out}$ & BCSFB efflux transporter (L/h) & 0.0 \\
\hline
$CL_{met}$ & Metabolic clearance due to brain enzymes (L/h) & 0.0 \\
\hline
$fu_{bb}$ & Drug unbound fraction in the brain blood & 0.125 \\
\hline
$fu_{bm}$ & Drug unbound fraction in the brain mass & 0.044 \\
\hline
$fu_{ccsf}$ & Drug unbound fraction in the Cranial CSF & 1.0 \\
\hline
$\lambda_{bb}$ & Unionization fraction in the brain blood & 0.033 \\
\hline
$\lambda_{bm}$ & Unionization fraction in the brain parenchyma & 0.017 \\
\hline
$\lambda_{ccsf}$ & Unionization fraction in the cranial CSF & 0.026 \\
\hline
\end{tabular}
\end{table}

\subsection{Existence and Uniqueness}\label{subs2}
Once the model is built we check the existence and uniqueness \cite{apostol1969multi}.The model problem described by equations (\ref{eqn1}), (\ref{eqn2}), (\ref{eqn3}), and (\ref{eqn4}) can be written in the following general form for non-autonomous linear systems of ODE as shown in the equations (\ref{exis}) and (\ref{exis-init}).
\begin{equation} \label{exis}
\begin{aligned}
 Y'(t) = \bm{A}(\theta)Y + \bm{G}(t;\theta), \quad Y \in \mathbb{R}^{n}, \quad \bm{A} \in R^{ n\times n}, \quad \bm{g} \in R^{n}
\end{aligned}
\end{equation}
\begin{equation} \label{exis-init}
\begin{aligned}
Y(t_{0}) = Y_{0}, \quad (t_{0}, Y_{0}) \in I \times \mathbb{R}^{k}, \quad I \subseteq \mathbb{R}
\end{aligned}
\end{equation}
where, 
$\theta \in \Theta \subseteq \mathbb{R}^p$ is a vector of constant parameters.
\begin{align*}
A &= \begin{bmatrix}
    \theta_{11} & \theta_{12} & \theta_{13} & \theta_{14} \\
    \theta_{21} & \theta_{22} & \theta_{23} & \theta_{24} \\
    \theta_{31} & \theta_{32} & \theta_{33} & \theta_{34} \\
    \theta_{41} & \theta_{42} & \theta_{43} & \theta_{44}
\end{bmatrix},
&
G(t;\theta) &= \begin{bmatrix}
    g_1(t;\theta) \\
    0 \\
    0 \\
    0
\end{bmatrix}
\end{align*}

\begin{equation*}
Y = \begin{bmatrix}
C_{bb} \\
C_{bm} \\
C_{ccsf} \\
C_{scsf}
\end{bmatrix},
\quad \text{} \quad
 Y'(t) = \frac{dY}{dt}, 
\quad \text{and} \quad
g_1(t;\theta) = Q_{brain}C_{art}
\end{equation*}

\begin{align*}
\theta_{11} &= -(Q_{brain} + PSB\lambda_{bb}fu_{bb} - CLB_{in}fu_{bb} + PSC\lambda_{bb}fu_{bb} + CLC_{in}fu_{bb}) / V_{bb} \\
\theta_{12} &= (PSB\lambda_{bm}fu_{bm} + CLB_{out}fu_{bm}) / V_{bb} \\
\theta_{13} &= (PSC\lambda_{ccsf}fu_{ccsf} + CLC_{out}fu_{ccsf} + Q_{csink}) / V_{bb} \\
\theta_{14} &= 0 \\
\theta_{21} &= (PSB\lambda_{bb}fu_{bb} + CLB_{in}fu_{bb}) / V_{bm} \\
\theta_{22} &= -(PSB\lambda_{bm}fu_{bm} + CLB_{out}fu_{bm} + Q_{bulk} + PSE\lambda_{bm}fu_{bm} + CL_{met}) / V_{bm} \\
\theta_{23} &= (PSE\lambda_{ccsf}fu_{ccsf}) / V_{bm} \\
\theta_{24} &= 0 \\
\theta_{31} &= (PSC\lambda_{bb}fu_{bb} + CLC_{in}fu_{bb}) / V_{ccsf} \\
\theta_{32} &= (PSE\lambda_{bm}fu_{bm}) / V_{ccsf} \\
\theta_{33} &= -(PSC\lambda_{ccsf}fu_{ccsf} + CLC_{out}fu_{ccsf} + Q_{sout} + PSE\lambda_{ccsf}fu_{ccsf} + Q_{sin} + Q_{csink}) / V_{ccsf} \\
\theta_{34} &= -Q_{sout} / V_{ccsf} \\
\theta_{41} &= 0 \\
\theta_{42} &= 0 \\
\theta_{43} &= Q_{sin} / V_{scsf} \\
\theta_{44} &= -(Q_{sout} + Q_{ssink}) / V_{scsf}
\end{align*}

\begin{theorem}[Existence and Uniqueness of the 4 Compartment Brain Model]
Let, A be an $n\times n$ constant matrix and let G be an n-dimensional vector
function continuous on an interval I $\subseteq \mathbb{R}$. Pick $t_0 \in I$. Then the initial value problem 
\begin{equation}\label{tm1}
Y'(t) = AY(t) + G(t), \qquad Y(t_0) = Y_{0}
\end{equation}

\textit{has a unique solution on $I$, which is}
\begin{equation}\label{tm2}
Y(x) = e^{(x-t_0)A}Y_{0} + e^{xA} \int_{t_0}^x e^{-tA} G(t) \, dt.
\end {equation}
\end{theorem}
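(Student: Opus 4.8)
The plan is to reduce the statement to the classical variation-of-parameters theorem for constant-coefficient linear systems, in three stages: (i) construct the matrix exponential $e^{tA}$ and record its elementary calculus properties; (ii) verify by direct substitution that the function displayed in \eqref{tm2} solves the initial value problem \eqref{tm1} (existence); and (iii) show that any two solutions on $I$ coincide (uniqueness). Finally I would observe that \eqref{eqn1}--\eqref{eqn4} are literally an instance of \eqref{exis}--\eqref{exis-init} with the constant matrix $A$ and the continuous forcing $G(t;\theta)=(Q_{brain}C_{art}(t),0,0,0)^{\top}$ already written out above, so the general result applies verbatim (provided $C_{art}$ is continuous on $I$).

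For stage (i), define $e^{tA}=\sum_{k=0}^{\infty}\frac{t^{k}A^{k}}{k!}$. Since $\|t^{k}A^{k}/k!\|\le |t|^{k}\|A\|^{k}/k!$ and $\sum_{k}|t|^{k}\|A\|^{k}/k!=e^{|t|\,\|A\|}<\infty$, the series converges absolutely and uniformly on every compact $t$-interval in any submultiplicative matrix norm; the series of termwise derivatives converges in the same way, which legitimises differentiating term by term and yields $\frac{d}{dt}e^{tA}=Ae^{tA}=e^{tA}A$. A Cauchy-product computation (using only that $A$ commutes with itself) gives the semigroup law $e^{sA}e^{tA}=e^{(s+t)A}$, so $e^{tA}$ is invertible with inverse $e^{-tA}$. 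Because $G$ is continuous on $I$, the map $t\mapsto\int_{t_0}^{t}e^{-sA}G(s)\,ds$ is $C^{1}$ on $I$ by the fundamental theorem of calculus.

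For stage (ii), let $Y(x)$ denote the right-hand side of \eqref{tm2}. Applying the product rule together with $\frac{d}{dx}e^{xA}=Ae^{xA}$ and $\frac{d}{dx}\int_{t_0}^{x}e^{-tA}G(t)\,dt=e^{-xA}G(x)$, one finds $\frac{d}{dx}\bigl(e^{xA}\int_{t_0}^{x}e^{-tA}G(t)\,dt\bigr)=Ae^{xA}\int_{t_0}^{x}e^{-tA}G(t)\,dt+e^{xA}e^{-xA}G(x)$, and similarly $\frac{d}{dx}e^{(x-t_0)A}Y_0=Ae^{(x-t_0)A}Y_0$; collecting terms gives $Y'(x)=AY(x)+G(x)$ on all of $I$. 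Setting $x=t_0$ kills the integral and leaves $e^{0}Y_0=Y_0$, so the initial condition holds, and hence \eqref{tm2} is a global solution.

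For stage (iii), let $Y_1,Y_2$ solve \eqref{tm1} and put $W=Y_1-Y_2$, so $W'=AW$ and $W(t_0)=0$. Then $Z(t):=e^{-tA}W(t)$ satisfies $Z'=-Ae^{-tA}W+e^{-tA}W'=-Ae^{-tA}W+e^{-tA}AW=0$ (again using that $A$ commutes with $e^{-tA}$), so $Z$ is constant on $I$; since $Z(t_0)=0$ we get $W\equiv 0$, i.e.\ $Y_1\equiv Y_2$. Equivalently one may invoke Picard--Lindel\"of, since $F(t,Y)=AY+G(t)$ is continuous in $t$ and globally Lipschitz in $Y$ with constant $\|A\|$, or close the estimate with Gr\"onwall's inequality. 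I expect the only genuine subtlety to be the justification of term-by-term differentiation of the matrix-exponential series and the commutation identities $Ae^{tA}=e^{tA}A$ and $e^{sA}e^{tA}=e^{(s+t)A}$; once those are in hand, everything else is bookkeeping with the product rule and the fundamental theorem of calculus, and no contraction-mapping machinery is needed because the coefficient matrix is constant.
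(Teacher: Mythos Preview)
Your proposal is correct and follows essentially the same approach as the paper: verify existence by direct differentiation of \eqref{tm2} and checking the initial condition, then prove uniqueness by reducing to the homogeneous problem $W'=AW$, $W(t_0)=0$ and concluding $W\equiv 0$. Your treatment is more thorough in that you explicitly build the matrix exponential and justify its calculus properties (which the paper takes for granted), and your uniqueness step uses the integrating factor $Z=e^{-tA}W$ rather than the paper's device of writing the general homogeneous solution as $e^{(x-t_0)A}C$, but these are minor variants of the same argument.
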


\begin{proof} The derivative of the solution Y(x) is

\begin{align*}
Y'(x) &= Ae^{(x - t_0)A}Y_0 + Ae^{xA} \int_{t_0}^x e^{-tA}G(t) \, dt + e^{xA}e^{-xA}G(x) \\
      &= AY(x) + G(x)
\end{align*} so it satisfies the differential equation. Also, setting $x = t_0$ in the equation \ref{tm2} gives $Y(t_0) = Y_{0}$ so it also satisfies the initial condition. This guarantee the existence of a solution. To prove the uniqueness, suppose that $V(x)$ is another solution to the differential equation such that
\[
V'(x) = AV(x) + G(x), \qquad V(t_0) = V_{0},
\]
Then $P(x) = Y(x) - V(x)$ is a solution to the homogeneous equation such that 

\begin{equation}\label{hom1}
P'(x) = AP(x), \qquad P(t_0) = 0
\end{equation}
This is a first-order linear homogeneous system with constant coefficient matrix \( A \). The general solution to the equation (\ref{hom1}) can be written as $P(x) = e^{(x - t_0)A} C,$ where \( C \) is a constant vector (or matrix) determined by the initial condition. Using the initial condition \( P(t_0) = 0 \), we substitute:$
P(t_0) = e^{(t_0 - t_0)A} C = e^{0A} C = I C = C = 0.$ Thus we get The unique solution to equation (\ref{hom1}) is P(x) = 0. This concludes that $V(x) =Y(x)$ hence guarantees the uniqueness of the equation (\ref{tm1}). 
\end{proof}

\subsection{Parameter Identifiability Analysis
}

In this section, to complete the system identification workflow, we perform structural and practical identifiability analyses to assess whether the model parameters can be uniquely determined from the available data.

\noindent\textbf{Structural Identifiability Analysis}

We consider the following general form of the mathematical model for parameter estimation:

\begin{equation}\label{parai}
\quad \dot{x}(t,p) = f(x(t), u(t), p), \quad y(t,p) = g(x(t), p), \quad x_0 = x(t_0, p)
\end{equation}

where $f$ and $g$ are vector functions of their arguments, $p \in \mathbb{R}^p$ is a $p$-dimensional vector of parameters, $x(t) \in \mathbb{R}^n$ is the $n$-dimensional state variable vector, $u(t) \in \mathbb{R}^r$ is the $r$-dimensional input vector, and $y(t) \in \mathbb{R}^m$ are the $m$-dimensional measured outputs. A parameter $p_i$ is said to be identifiable if the following equation holds true:
\begin{equation}\label{identi}
y(t, p) = y(t, p^*) \quad \Rightarrow \quad p_i = p_i^*
\end{equation}

where $p^*$ is an alternative parameter vector.  

\begin{itemize}
    \item If Eq.(\ref{identi}) holds for any $p_i^*$, then $p_i$ is said to be \textbf{globally identifiable}.
    \item If Eq.(\ref{identi}) holds for a neighborhood of $p_i^*$, then $p_i$ is said to be \textbf{locally identifiable}.
    \item If Eq.(\ref{identi}) does not hold true for any $p_i$ locally or globally, then $p_i$ is said to be \textbf{structurally unidentifiable}.
\end{itemize}

There are multiple methods that can be used to perform structural identifiability analysis \cite{prac1, prac2}. In this section, we only test for the local identifiability of the system and, for convenience, we will refer to a system as being identifiable when it is structually locally identifiable. We use the Julia library StructuralIdentifiability \cite{prac3} to test for structural identifiability of the model. The existing algorithms implemented in the library require both f and g to be rational functions, which are fractions of polynomials. We specify the parametric ODE model using the @ODEmodel macro. x’(t) is the derivative of state variable x(t), which is assumed to be unknown if not specified otherwise. y(t) defines the output variable which is assumed to be given. Figure (\ref{identifiability}) shows the Julia code to perform local structural identifiability. The last line of the code tests the local identifiability of the model. 

\begin{figure}[H]
\centering
\includegraphics[width=15cm]{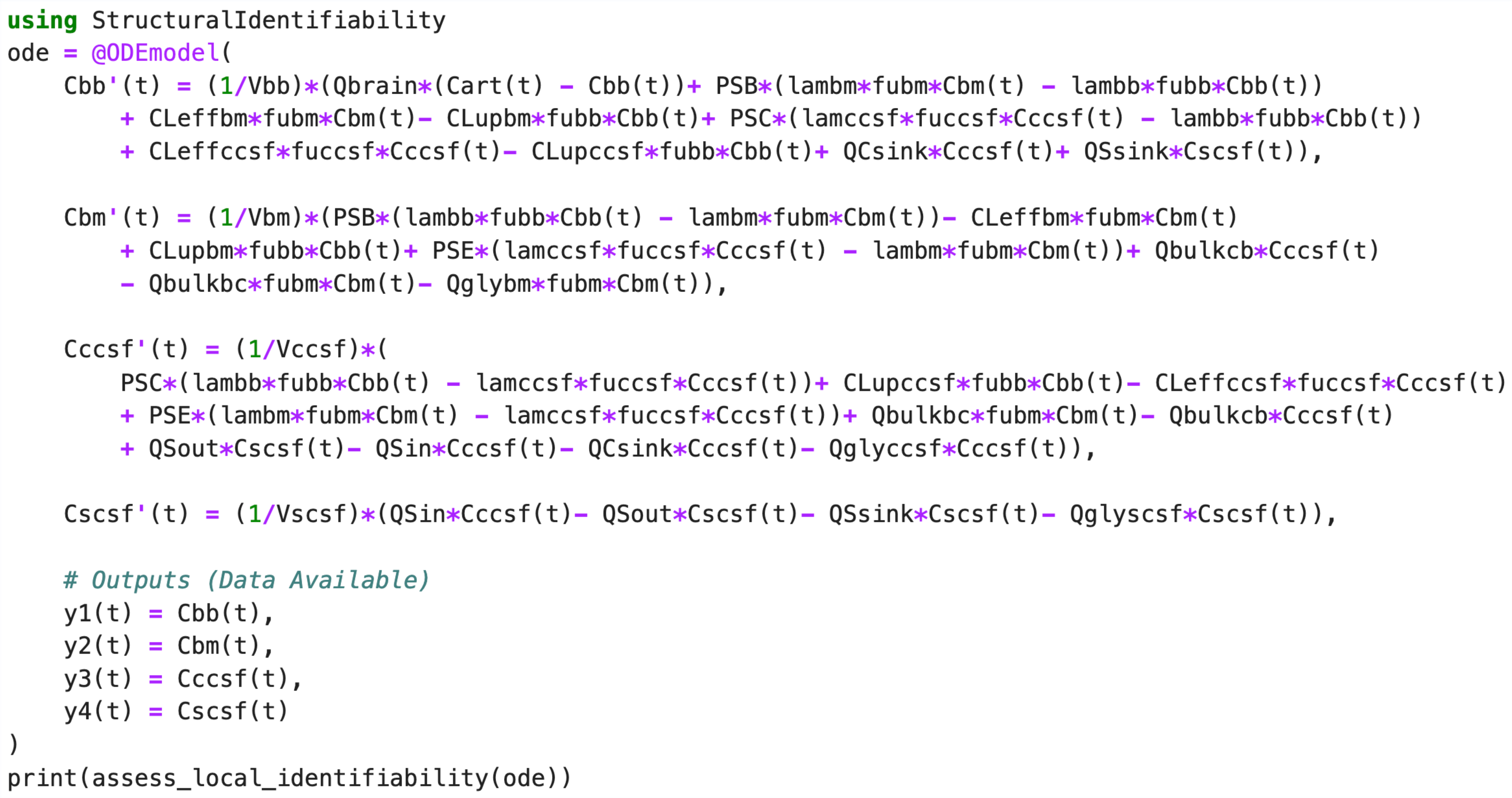}
\begin{tiny}
\caption{Julia code to  perform local structural identifiability analysis}\label{identifiability}
\end{tiny}
\end{figure}

\begin{table}[h!]
\centering
\caption{Local structural identifiability result of the 4-compartmental brain model}
\label{tab:3}
\begin{tabular}{>{\raggedright\arraybackslash}p{3cm} p{2cm} p{2cm} p{2cm} p{2cm} p{2cm}}
\hline
Parameter & Case1 & Case2 & Case3 & Case4 & Case5 \\
\hline
Vbb & \texttimes & \texttimes & \texttimes & \checkmark & \checkmark \\
Vbm & \texttimes & \texttimes & \checkmark & \checkmark & \checkmark \\
Vccsf & \texttimes & \texttimes & \checkmark & \checkmark & \checkmark \\
Vscsf & \texttimes & \texttimes & \checkmark & \checkmark & \checkmark \\
Qbrain & \texttimes & fixed & fixed & fixed & fixed \\
Qbulkbc & \texttimes & fixed & fixed & fixed & fixed \\
Qbulkcb & \texttimes & fixed & fixed & fixed & fixed \\
QCsink & \texttimes & fixed & fixed & fixed & fixed \\
QSsink & \texttimes & \checkmark & \checkmark & \checkmark & \checkmark \\
QSin & \texttimes & \texttimes & fixed & fixed & fixed \\
QSout & \texttimes & \texttimes & fixed & fixed & fixed \\
Qglybm & \texttimes & \texttimes & fixed & fixed & fixed \\
Qglyccsf & \texttimes & \texttimes & fixed & fixed & fixed \\
Qglyscsf & \texttimes & \texttimes & \texttimes & fixed & fixed \\
PSB & \texttimes & \texttimes & \texttimes & fixed & fixed \\
PSC & \texttimes & \texttimes & \texttimes & fixed & fixed \\
PSE & \texttimes & \texttimes & \texttimes & fixed & fixed \\
CLeffbm & \texttimes & \texttimes & \texttimes & \texttimes & fixed \\
CLupbm & \texttimes & \texttimes & \texttimes & \texttimes & fixed \\
CLeffccsf & \texttimes & \texttimes & \texttimes & \texttimes & fixed \\
CLupccsf & \texttimes & \texttimes & \texttimes & \texttimes & fixed \\
fubb & \texttimes & \texttimes & \texttimes & \texttimes & \checkmark \\
fubm & \texttimes & \texttimes & \texttimes & \texttimes & \checkmark \\
fuccsf & \texttimes & \texttimes & \texttimes & \texttimes & \checkmark \\
lambb & \texttimes & \texttimes & \texttimes & \texttimes & \checkmark \\
lambm & \texttimes & \texttimes & \texttimes & \texttimes & \checkmark \\
lamccsf & \texttimes & \texttimes & \texttimes & \texttimes & \checkmark \\
\hline
\end{tabular}
\end{table}

Due to the nonlinear nature of the model, parameter identifiability is sometimes challenging. To improve identifiability, we systematically fixed certain parameters, which allowed more parameters to become locally identifiable, as demonstrated in Cases 1–5 of Table (\ref{tab:3}). In all five cases, data were provided for every state variable.

\noindent\textbf{Practical identifiability analysis}

Structural identifiability examines only the mathematical structure of a system of ODEs, the model inputs, and the outputs that are measured. It does not take into account how these outputs are actually measured or the experimental data itself. Practical identifiability analysis, on the other hand, evaluates whether the model parameters can be identified, at least locally, using both the system of ODEs and the available experimental data. It is important to note that a model must be structurally identifiable before it can be practically identifiable. Although several approaches exist for conducting practical identifiability analysis \cite{prac4, prac5}, this study focuses specifically on the bootstrapping method. We utilize MONOLIX software to perform bootstrap analysis and present the correlation matrix of the parameters and the confidence intervals for estimated parameters in Example 2.

\section{Methodology of Inverse PINN}\label{sec:3}
Physics-Informed Neural Networks are deep neural networks that can be trained to solve forward and inverse differential equation problems while respecting the physical laws given by the differential equations \cite{hariri2025physics,raissi2019physics}. This is achieved by incorporating a physics-based term into the loss function during optimization procedure in the training process. The convergence is achieved by minimizing a loss function which expression is based on the mean squared error. Finding optimal set of  parameters is achieved by solving an optimization problem using a gradient algorithm that relies on automatic differentiation to back-propagate gradients through the network \cite{baydin2018automatic}.

\subsection{PINN Architecture of 4-Compartmental Brain Model}
The aim of this section is to present the PINN technique for estimating parameters and then solving the system of differential equations given the observed data. Figure (\ref{nn}) shows a neural network architecture starts from an input time vector. Then, as the output of neural network model we obtain the solution of the system of ODEs $Y(t;\theta) = (C_{bb}, C_{bm}, C_{ccsf}, C_{scsf})$. The output goes to an optimization block where it minimizes the data loss, initial condition loss  as well as the ODE loss and updates the neural network parameters. We compute the derivatives of the neural network outputs with respect to inputs using automatic differentiation (AD) via PyTorch's backward propagation and chain rule. We employ adaptive weights for the data loss ($\lambda_{Data}$), initial condition loss ($\lambda_{IC}$) and ODE loss ($\lambda_{ODE}$) that can be imposed to train simultaneously with the neural network parameters $\theta$. However, we manually fixed these parameters $\lambda_{Data}$, $\lambda_{IC}$, and $\lambda_{ODE}$ to certain values to balance the individual losses. The system parameters $p$ are simultaneously trained with the neural network parameters $\theta$ as external trainable variables. 

\begin{figure}[H]
\centering
\includegraphics[width=14cm]{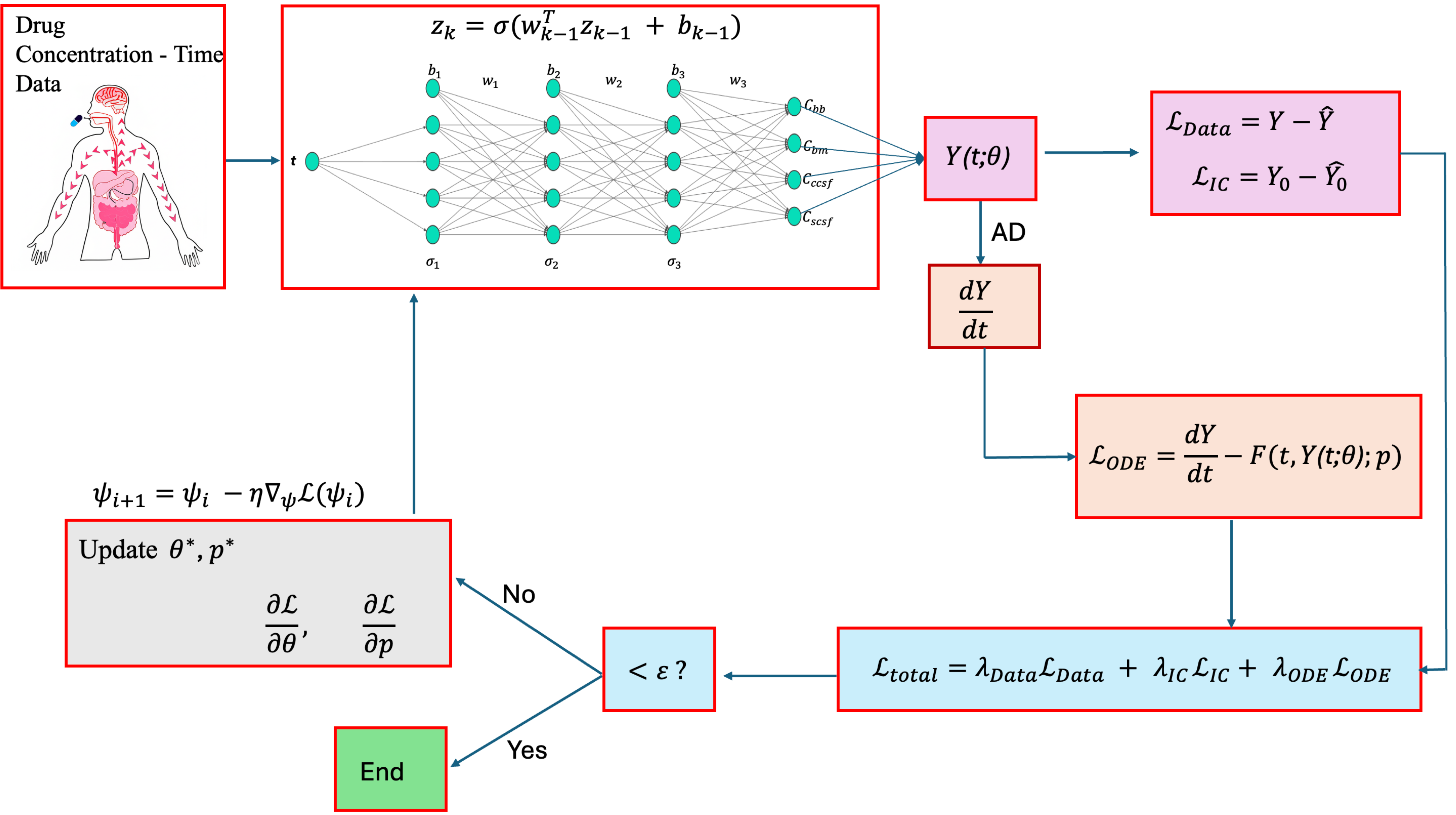}
\begin{tiny}
\caption{The inverse physics-informed neural network model starts with time as inputs and then the outputs goes to an optimization block where it minimizes the total loss  by optimizing the parameters $\psi$ which includes system parameters $p$ and neural network parameters $\theta$.
}\label{nn}
\end{tiny}
\end{figure}

Mathematically, a deep neural network can be represented as a hierarchical nonlinear mapping that transforms an input into an output through a series of parameterized operations across its layers. These layers are also referred to as hidden layers because their outputs are not visible to the external environment. The closed form of the solution $\mathbf{Y}(t; \boldsymbol{p})$ can be obtained by training a deep neural network which is a type of neural network that has multiple layers of artificial neurons between the input and output layers. In this study, we use fully connected neural networks (NNs) to model the velocity field. These NNs are built by connecting multiple layers of artificial neurons, where each layer transforms its input in two steps: \textbf{(1) Linear Transformation}, where the input from the previous layer ($\mathbf{z}_{k-1}$) is multiplied by a weight matrix ($\mathbf{W}_{k-1}$) and shifted by a bias vector ($\mathbf{b}_{k-1}$), expressed as; $$\text{Linear output} = \mathbf{W}_{k-1}^T \mathbf{z}_{k-1} + \mathbf{b}_{k-1}$$ and \textbf{(2) Nonlinear Activation}, where the result is passed through an element-wise activation function $\sigma$ to introduce nonlinearity: $$\mathbf{z}_k = \sigma(\text{Linear output})$$. The weights ($\mathbf{W}$) and biases ($\mathbf{b}$) are optimized during training using the \textit{ADAM} algorithm, a variant of gradient descent.  We use the following  activation functions in this study to train the neural network and for comparison purpose\cite{wang2023learning, maczuga2023influence}. 
\vspace{0.2cm}

\noindent Hyperbolic Tangent (Tanh):
$$\sigma(x) = \tanh(x) = \frac{e^x - e^{-x}}{e^x + e^{-x}}, \quad x \in (-\infty, +\infty), \quad \sigma(x) \in (-1, 1)$$

\noindent Sigmoid (Logistic):
$$\sigma(x) = \frac{1}{1 + e^{-x}}, \quad x \in (-\infty, +\infty), \quad \sigma(x) \in (0, 1)$$

\noindent Rectified Linear Unit (ReLU):
$$\sigma(x) = \max(0, x), \quad x \in (-\infty, +\infty), \quad \sigma(x) \in [0, +\infty)$$

\noindent  Periodic (Sine):
$$
\sigma(x) = \sin(\omega x), \quad \omega>0, \quad x \in (-\infty, +\infty), \quad \sigma(x) \in [-1, 1]
$$

Figure (\ref{ac}) shows the graphs of the activation functions used in this study: Rectified Linear Unit (ReLU), Sigmoid, Tanh, and Sine. A few other widely used activation functions include ELU, GELU, SELU, SiLU, and Swish; see \cite{jagtap2020locally} for further details. It is essential that the activation function be non-linear. If a linear activation were used, the network would reduce to a composition of linear transformations, which itself is equivalent to a single linear mapping, thereby severely restricting its approximation capacity. As a simple illustrative example, we consider a neural network with three hidden layers to show how the network processes the input. Let the input layer receive temporal coordinates $\mathbf{t} = (t_1, \dots, t_d) \in \mathbb{R}^{d+1}$. Then fully connected three hidden layers with $n_1$, $n_2$, and $n_3$ neurons, respectively can be presented as, 
\begin{equation*}
 \mathbf{z}_{1}= \sigma(\mathbf{W}_1 \mathbf{t} + \mathbf{b}_1) 
\quad \rightarrow \quad
\mathbf{z}_{2} = \sigma(\mathbf{W}_2  \mathbf{z}_{1} + \mathbf{b}_2) 
\quad \rightarrow \quad
\mathbf{z}_{3} = \sigma(\mathbf{W}_3  \mathbf{z}_{2} + \mathbf{b}_3)
\end{equation*}

\noindent for a given activation function $\sigma$. Then the output layer produces the solution vector $\mathbf{Y}(t; \boldsymbol{\theta},p) \in \mathbb{R}^4$ for the ODE system.
    \begin{align*}
    \text{Output:} & \quad \mathbf{Y}(t; \boldsymbol{\theta},p) = \mathbf{W}_4 \mathbf{z}_{3} + \mathbf{b}_4
    \end{align*}
\noindent where, weight matrices are defined as:
\[
\mathbf{W}_1 \in \mathbb{R}^{n_1 \times d}, \quad
\mathbf{W}_2 \in \mathbb{R}^{n_2 \times n_1}, \quad
\mathbf{W}_3 \in \mathbb{R}^{n_3 \times n_2}, \quad
\mathbf{W}_4 \in \mathbb{R}^{4 \times n_3},
\]
and the bias vectors are given by:
\[
\mathbf{b}_1 \in \mathbb{R}^{n_1}, \quad
\mathbf{b}_2 \in \mathbb{R}^{n_2}, \quad
\mathbf{b}_3 \in \mathbb{R}^{n_3}, \quad
\mathbf{b}_4 \in \mathbb{R}^4.
\]

\noindent The complete set of trainable parameters includes both the network parameters $
\boldsymbol{\theta} = \bigcup_{i=1}^{4} \{ \mathbf{W}_i, \mathbf{b}_i \}
$ and the system parameters $p \in \mathbb{R}^k$, where $p$ represents additional physical or model parameters that need to be learned during training as external trainable variables.

\begin{figure}[H]
\centering
\includegraphics[width=14cm]{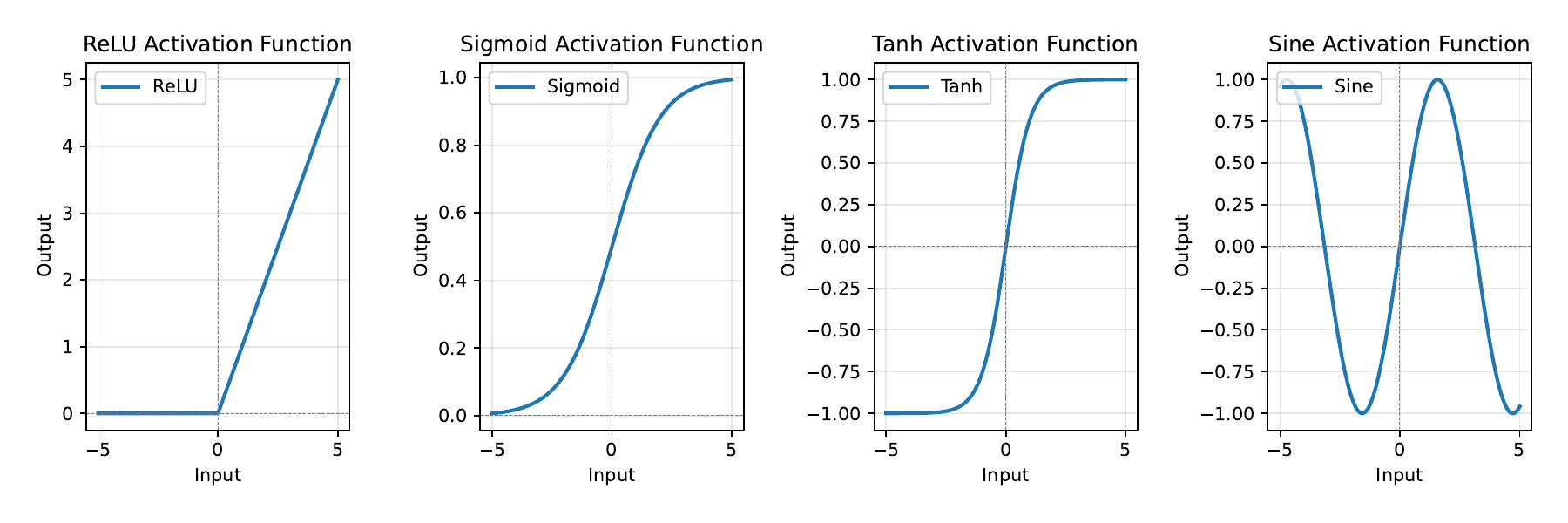}
\begin{tiny}
\caption{Activation functions.}\label{ac}
\end{tiny}
\end{figure}

\subsection{Augmented Loss Function and Its Minimization} The loss function in PINNs is designed to enforce both data fidelity and physical consistency by combining multiple objective terms. The data loss ($\mathcal{L}_{\text{data}}$) ensures that the neural network's predictions align with observed measurements, minimizing discrepancies at known data points. Data loss is calculated by the sum of the  square differences of the predicted concentrations and the observed data of the four compartments $C_{bb}$, $C_{bm}$, $C_{ccsf}$, and $C_{scsf}$. The ODE loss ($\mathcal{L}_{\text{ODE}}$), weighted by $\lambda_{ODE}$, penalizes deviations from the governing physical laws, embedding the underlying dynamics directly into the learning process. PDE loss makesure to fulfill the mass balance law described by the four-compartmental brain model.  Finally, the initial condition loss ($\mathcal{L}_{\text{IC}}$), scaled by $\lambda_{IC}$, guarantees that the solution adheres to prescribed initial constraints. By optimizing this composite loss function, the network not only interpolates sparse data but also generalizes as a physics-compliant surrogate model, robust even in regions where measurements are unavailable. Then, we define the total loss, $\mathcal{L}$, as follows:
\[
\mathcal{L} = \lambda_{Data} \mathcal{L}_{\text{data}} + \lambda_{ODE} \mathcal{L}_{\text{ODE}} + \lambda_{IC}  \mathcal{L}_{\text{IC}}
\]
where:
\begin{align*}
    \mathcal{L}_{\text{data}} &= \frac{1}{N_d} \sum_{i=1}^{N_d} \|\mathbf{y}(t_i) - \mathbf{y}_i\|^2 \\
    \mathcal{L}_{\text{ODE}} &= \frac{1}{N_c} \sum_{j=1}^{N_c} \sum_{k=1}^4 \|\frac{dY_{k}(t_{j})}{dt}- F_k(t_j)\|^2 \\
    \mathcal{L}_{\text{IC}} &= \frac{1}{N_{\text{IC}}} \sum_{l=1}^{N_{\text{IC}}} \|\mathbf{y}(0) - \mathbf{y}_0\|^2.
\end{align*}
\noindent The optimal parameters $\psi^*$ are obtained by minimizing the total loss function:
\begin{equation}
\psi^* = \underset{\psi}{\arg\min} \, \mathcal{L}(\psi)
\end{equation}
\noindent where, $\psi$ represents all trainable parameters of the neural network including $\theta$ and $p$.  $\mathcal{L}(\psi)$ is the composite loss function defined as 
\begin{equation*}
\mathcal{L} = \lambda_{Data}\overbrace{\frac{1}{N_d} \sum \|\mathbf{y} - \mathbf{y}_{\text{data}}\|^2}^{\text{Data}} + \lambda_{ODE} \overbrace{\frac{1}{N_c} \sum \|\frac{dY}{dt} - \mathbf{F}\|^2}^{\text{ODE}} + \lambda_{IC} \overbrace{\frac{1}{N_{\text{IC}}} \sum \|\mathbf{y}(0) - \mathbf{y}_0\|^2}^{\text{IC}}.
\end{equation*}

\noindent A gradient descent algorithm is used until convergence towards the minimum is obtained for a predefined accuracy (or a given maximum iteration number) as
\begin{equation}
    \psi_{i+1} = \psi_i - \eta \nabla_{\psi} \mathcal{L}(\psi_i),
    \label{eq:gradient_descent}
\end{equation}
for the $i$-th iteration (also called epoch in the literature), leading to $\psi^* = \arg\min_{\psi} \mathcal{L}(\psi)$, where $\eta$ is known as the learning rate parameter. In this work, we choose the well-known Adam optimizer. A standard automatic differentiation technique is necessary to compute derivatives (i.e., $\nabla_{\psi}$) with respect to the neural network parameters (e.g., weights and biases) of the model \cite{raissi2019physics}. An important feature of this architecture is its flexibility, which enables the simultaneous optimization of unknown parameters along with the network’s weights and biases. Thus the system parameters $p$ are trained as external trainable variable. Optimal choice of values for parameters ($\lambda_{Data}, \lambda_{ODE}$ and $\lambda_{IC}$) allow to improve the eventual unbalance between the partial losses during the training process. These weights can be user-specified or automatically tuned. In this work, the network architecture (e.g., hidden layers, neurons per layer) and hyperparameters (e.g., learning rate, loss weights) were selected manually. While automated methods exist, their implementation is beyond the scope of this study. 

\section{Experimental Procedure}

Applying PINNs to address both forward and inverse problems in ODE-based dynamical systems requires a sequence of well-defined steps. In the context of PBPK-iPINN, these steps are summarized in Algorithm 1 and described in detail in Section 4.1.

\begin{figure}[H]
\centering
\includegraphics[width=12.0cm]{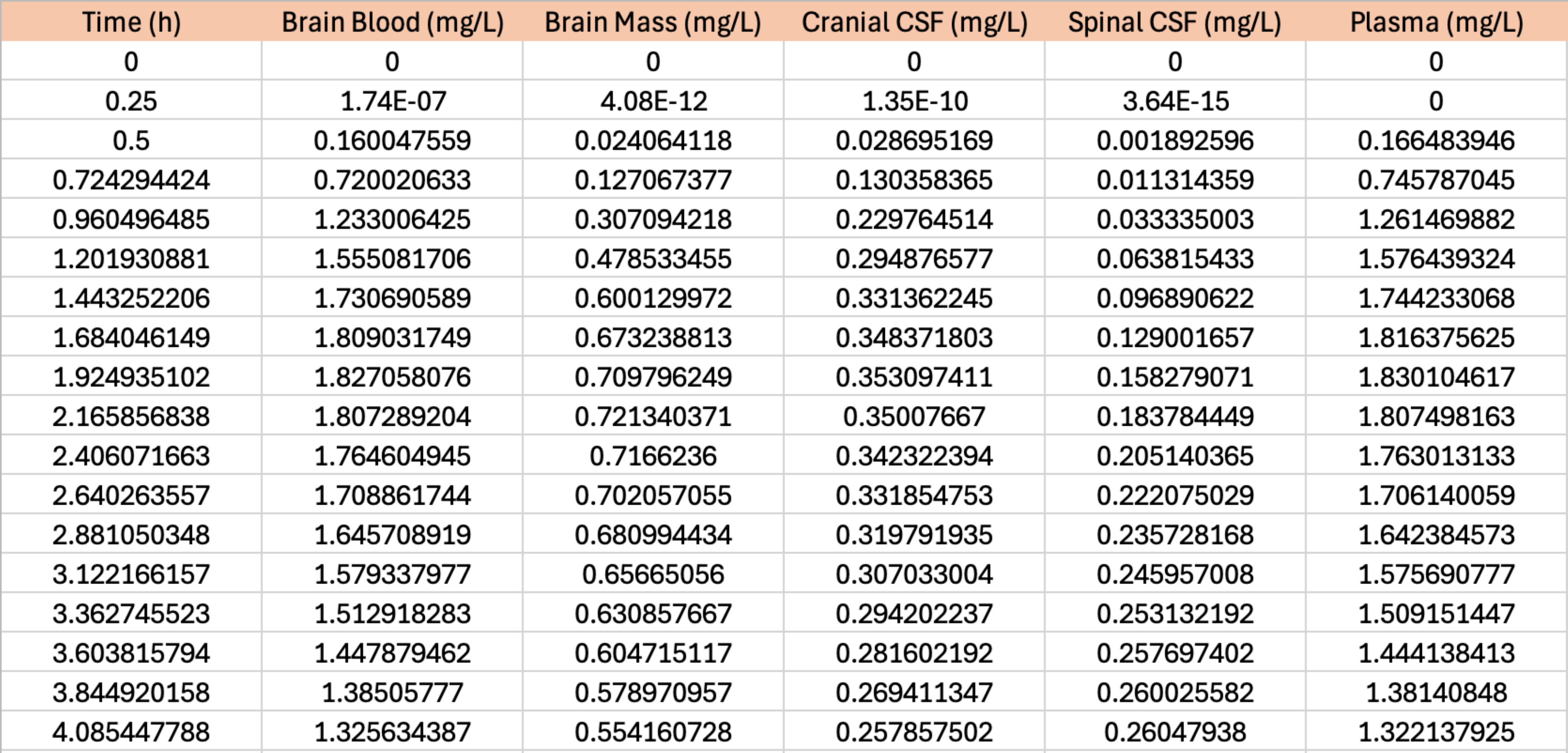}
\begin{tiny}
\caption{Input file showing first few rows of the concentration time data of brain blood, brain mass, carnial CSF, spinal CSF, and plasma.}\label{bdata}
\end{tiny}
\end{figure}

\begin{algorithm}[H]
\caption{: PBPK-iPINN Methodology}
\label{alg:pinn_parameter_estimation}
\begin{algorithmic}[1]
\STATE \textbf{Step 1:  Import data}
\STATE \quad Read timepoints and concentration data from the input file
\STATE
\STATE \textbf{Step 2: Define exogenous input function}
\STATE \quad Define a linear interpolation function for a continuous plasma profile
\STATE
\STATE \textbf{Step 3: Specify the parameters to be estimated, initialize computational domain, and  enforce initial conditions for state variables}
\STATE \quad Parameter = \textcolor{blue}{dde.Variable}
(initial value) 
\STATE \quad Create a \textcolor{blue}{TimeDomain class}
\STATE \quad Define a function to return points inside a subdomain.
\STATE \quad Specify initial conditions using \textcolor{blue}{dde.icbc.IC}.
\STATE
\STATE \textbf{Step 4: Formulate the system of ODEs}
\STATE \quad Define a function to return residuals of ODEs
\STATE \quad Enforce parameter transformations to guarantee positivity if needed.
\STATE
\STATE \textbf{Step 5: Assign training data points and  assemble data module}
\STATE \quad Use \textcolor{blue}{dde.icbc.PointSetBC} to assign training data
\STATE \quad Assemble data module using  \textcolor{blue}{dde.data.PDE} 
\STATE
\STATE \textbf{Step 6: Construct neural network architecture}
\STATE \quad Define the number of neurons and layers
\STATE \quad Specify the activation function(e.g., ReLu, Tanh)
\STATE \quad Specify the weight and bias initializers (Glorot uniform, Glorot normal)
\STATE
\STATE \textbf{Step 7: Setup, compile, train, and predict}
\STATE \quad Combine data and the network architecture :  \textcolor{blue}{dde.Model(data, net)} 
\STATE \quad Define parameters to be estimated as external trainable variables
\STATE \quad Assign optimization algorithm (adam), learning rate, and loss weights
\STATE \qquad $
\mathcal{L} = \lambda_{Data} \mathcal{L}_{\text{data}} + \lambda_{ODE} \mathcal{L}_{\text{ODE}} + \lambda_{IC}  \mathcal{L}_{\text{IC}}
$
\STATE \qquad $\theta^* = \arg\min_{\theta} \mathcal{L}_{\text{total}}(\theta)$
\STATE \quad Specify the numebr of iterations and train the model:  \textcolor{blue}{model.train(iterations=k)}
\STATE \quad Predict solution for set of time points: \textcolor{blue}{model.predict(time)}
\end{algorithmic}
\end{algorithm}

\subsection{A Step-by-Step Guide to Implementing a Four-Compartment Brain PINN in DeepXDE}  Numerous software libraries, including DeepXDE, SimNet, PyDEns, NeuroDiffEq, NeuralPDE, SciANN, ADCME, GPyTorch, and Neural Tangents, are specifically designed for physics-informed machine learning. We selected DeepXDE \cite{lu2021deepxde} to implement the Physics-Informed Neural Network (PINN) methodology for our four-compartment brain model. This library provided advanced features and robust network architectures that were essential for achieving productive and trustworthy results. Its compatibility with standard tools like the Anaconda Python distribution allows for simple import into a Jupyter notebook. A detailed description of each component of the Algorithm 1 is given in as  follows: 

\begin{enumerate}[align=left, leftmargin=*, labelwidth=!, itemindent=0pt]
    \item \textbf{Import Data:} \textit{ Data are generated using  Simcyp™ PBPK Simulator. First, a virtual population of 100 healthy volunteers was generated. Each individual in the population received a single oral dose of 10 mg abemaciclib drug. Subsequently, the mean drug concentrations across the population were recorded for each compartment of the brain. Finally, the mean plasma concentrations were also recorded. Concentration data are recorded up to 48 hours with 200 time points. A portion of the data file is shown in figure (\ref{bdata}) due to space limitation.  }
 
\item \textbf{Define exogenous input function:} \textit{The plasma concentration data are provided as input to the system via the function \( g_1( t;\theta) \), as defined in Equation (\ref{exis}). Since the plasma time - concentration data are available at discrete time points, linear interpolation is applied to construct a continuous approximation. As a result, \( g(\theta, t) \) becomes a continuous function of time. To this end we implemented following function (PlasmaInterp) to our program where "observe t" and "plasma" are user provided two discrete vectors. It is also worth noting that in practice plasma concentration data are also measured with errors.}

\begin{lstlisting}[style=mypython]
def PlasmaInterp(t):
    spline = sp.interpolate.Rbf(observe_t, plasma,
                             function="linear", smooth=0, epsilon=0)
    return spline(t[:,0:])
\end{lstlisting}

\item \textbf{Specify the parameters to be estimated, initialize computational domain, and enforce initial conditions for state variables:} \textit{For this analysis, and for the sake of simplicity, we estimate only six parameters. The selected parameters are \( V_{\mathrm{bb}} \), \( V_{\mathrm{bm}} \), \( V_{\mathrm{ccsf}} \), \( V_{\mathrm{scsf}} \), \( fu_{\mathrm{bb}} \), and \( \lambda_{\mathrm{ccsf}} \). However, any number or combination of parameters can be chosen depending on the modeling objectives. This selection is specified using the built in function called \texttt{dde.Variable} as follows where an educational guess of the initial parameters can be provided.}

\begin{lstlisting}[style=mypython]
Vbb = dde.Variable(1.0)
Vbm = dde.Variable(1.0)
Vccsf = dde.Variable(1.0)
Vscsf = dde.Variable(1.0)
fubb = dde.Variable(1.0)
lamccsf = dde.Variable(1.0)
\end{lstlisting}

\noindent \textit{The computational domain is defined using the built in function  \\ \texttt{dde.geometry.TimeDomain}}.

\begin{lstlisting}[style=mypython]
geom = dde.geometry.TimeDomain(0, maxtime)
\end{lstlisting}

\noindent  \textit{ Initial conditions are called as follows where, \texttt{x0[0]} is the initial value  of the first state variable.}
    
\begin{lstlisting}[style=mypython]
ic1 = dde.icbc.IC(geom, lambda X: x0[0], boundary, component=0)
ic2 = dde.icbc.IC(geom, lambda X: x0[1], boundary, component=1)
ic3 = dde.icbc.IC(geom, lambda X: x0[2], boundary, component=2)
ic4 = dde.icbc.IC(geom, lambda X: x0[3], boundary, component=3)
\end{lstlisting}

\item \textbf{Formulate the system of ODEs:} \textit{The system of ordinary differential equations (ODEs) defined through the equations (\ref{eqn1}), (\ref{eqn2}), (\ref{eqn3}) and (\ref{eqn4})  is defined as a set of residual functions. The gradients of the state variables are specified in the following way: }
    
\begin{lstlisting}[style=mypython]
dCbb_x    = dde.grad.jacobian(y, x, i=0)
dCbm_x    = dde.grad.jacobian(y, x, i=1)
dCccsf_x  = dde.grad.jacobian(y, x, i=2)
dCscsf_x  = dde.grad.jacobian(y, x, i=3)
\end{lstlisting}

\noindent \textit{In order to maintain the positivity of the parameters and to avoid unrealistic parameter values during the training process we apply sigmoid transformation to each parameters to be trained, which also helps to maintain the mass balance principle described by the differential equations.}

\begin{lstlisting}[style=mypython]
Vbb     = min1 + (max1 - min1) * torch.sigmoid(Vbb_p)
Vbm     = min2 + (max2 - min2) * torch.sigmoid(Vbm_p)
Vccsf   = min3 + (max3 - min3) * torch.sigmoid(Vccsf_p)
Vscsf   = min4 + (max4 - min4) * torch.sigmoid(Vscsf_p)
fubb    = min5 + (max5 - min5) * torch.sigmoid(fubb_p)
lamccsf = min6 + (max6 - min6) * torch.sigmoid(lamccsf_p)
\end{lstlisting}

\item \textbf{Assign training data points and assemble data module:} \textit{The concentration data of the four compartments are are incorporated into the model training process in the following way: }

\begin{lstlisting}[style=mypython]
Obs_Cbb   = dde.icbc.PointSetBC(Obs_t,
            Obs_Data['Cbb'].values.reshape(-1,1), component=0)
Obs_Cbm   = dde.icbc.PointSetBC(Obs_t,
            Obs_Data['Cbm'].values.reshape(-1,1), component=1)
Obs_Cccsf = dde.icbc.PointSetBC(Obs_t,
            Obs_Data['Cccsf'].values.reshape(-1,1), component=2)
Obs_Cscsf = dde.icbc.PointSetBC(Obs_t,
            Obs_Data['Cscsf'].values.reshape(-1,1), component=3)
\end{lstlisting}

\noindent \textit{We incorporate geometry, system of ODEs, initial conditions, concentration data, number of domain points, number of boundary points, and additional training points via the data module. For example, the data module of the four-compartmental model can be specified as follows:}
\begin{lstlisting}[style=mypython]
data = dde.data.PDE(geom, B4_system, [ic1, ic2, ic3, ic4,
       Obs_Cbb, Obs_Cbm, Obs_Cccsf, Obs_Cscsf],
       num_domain = 1, num_boundary = 2,
       anchors = all_anchors,
       auxiliary_var_function = PlasmaInterp)
\end{lstlisting}

\item \textbf{Construct neural network architecture:} \textit{The number of input vectors (a), number of neuron per layer (b), number of layers (c), number of output vectors (d), type of the activation function (tanh) and the, the type of a weight initialization (Glorot normal) strategy can be assigns as follows:}

\begin{lstlisting}[style=mypython]
net = dde.nn.FNN([a] + [b] * c + [d], "tanh", "Glorot normal")
\end{lstlisting}

\item \textbf{Setup, compile, train, and predict:}
\textit{ We set up the model by combining the data and the network. Then we list the parameters to be trained and pass it to model compile where we use the Adam's optimization algorithm with a specified learning rate(k). Then we train the model for a given number of epochs. Once the model is trained we predict the output for any given set of time points.}

\begin{lstlisting}[style=mypython]
loss_weights = np.concatenate([
    np.ones(4) * 1.0,
    np.ones(4) * 2.0,
    np.ones(4) * 3.0
])
loss_weights = loss_weights.tolist()
\end{lstlisting}

\begin{lstlisting}[style=mypython]
model = dde.Model(data, net)
paramaters = [Vbb_p, Vbm_p, Vccsf_p, Vscsf_p, fubb_p, lamccsf_p]
model.compile("adam", lr = k, external_trainable_variables = paramaters)
model.train(iterations = max_iter)
model.predict(time_new)
\end{lstlisting}

\end{enumerate}

\section{Numerical Results} 
This section presents our main results and their validation. In \textbf{Example 1}, we identify the optimal configuration including the activation function, number of layers and neurons, weight and bias initializers (Glorot uniform and Glorot normal), optimizer, and learning rate that minimizes the total loss (objective function). In \textbf{Example 2}, we solve the inverse problem, estimating the parameters \( V_{\mathrm{bb}} \), \( V_{\mathrm{bm}} \), \( V_{\mathrm{ccsf}} \), \( V_{\mathrm{scsf}} \), \( fu_{\mathrm{bb}} \), and \( \lambda_{\mathrm{ccsf}} \) and presenting the corresponding predicted concentration profiles. We further present the results of the practical identifiability analysis. Finally, in \textbf{Example 3}, we validate these results using two additional numerical and statistical techniques.

\vspace{0.5cm}

\textbf{Example 1.}
In this example, we test four activation functions ReLU, Sigmoid, Tanh, and Sine using varying numbers of layers and neurons per layer. We evaluate prediction accuracy by recording the total loss (as defined in Section~\ref{sec:3}). For weight and bias initialization, we compare the ``Glorot Uniform'' and ``Glorot Normal'' methods, ultimately selecting Glorot Uniform because it yields the lowest loss value. We use the Adam optimizer with a learning rate of $0.0001$, running each simulation for 10000 iterations. During training, we record both the loss values and the training time to determine the optimal hyperparameter combinations.

As shown in Table~(\ref{tab:4}), the Tanh and Sigmoid activation functions generally perform best for the four-compartment brain model. Figure~(\ref{allfig}) further confirms their  accuracy compared to Sin and ReLU by overlaying observed data with the predicted concentration profiles from each activation function. The training times in Table~(\ref{tab:4}) align with the expected trend that computational cost increases with the number of layers and neurons. However, we observed several occasions where the order of the error (total loss) is of the magnitude of $10^{-2}$ after 10000 iterations. Therefore, we further investigate the best network architecture in Example 2. 

\begin{figure}[H]
\centering
\includegraphics[width=12cm]{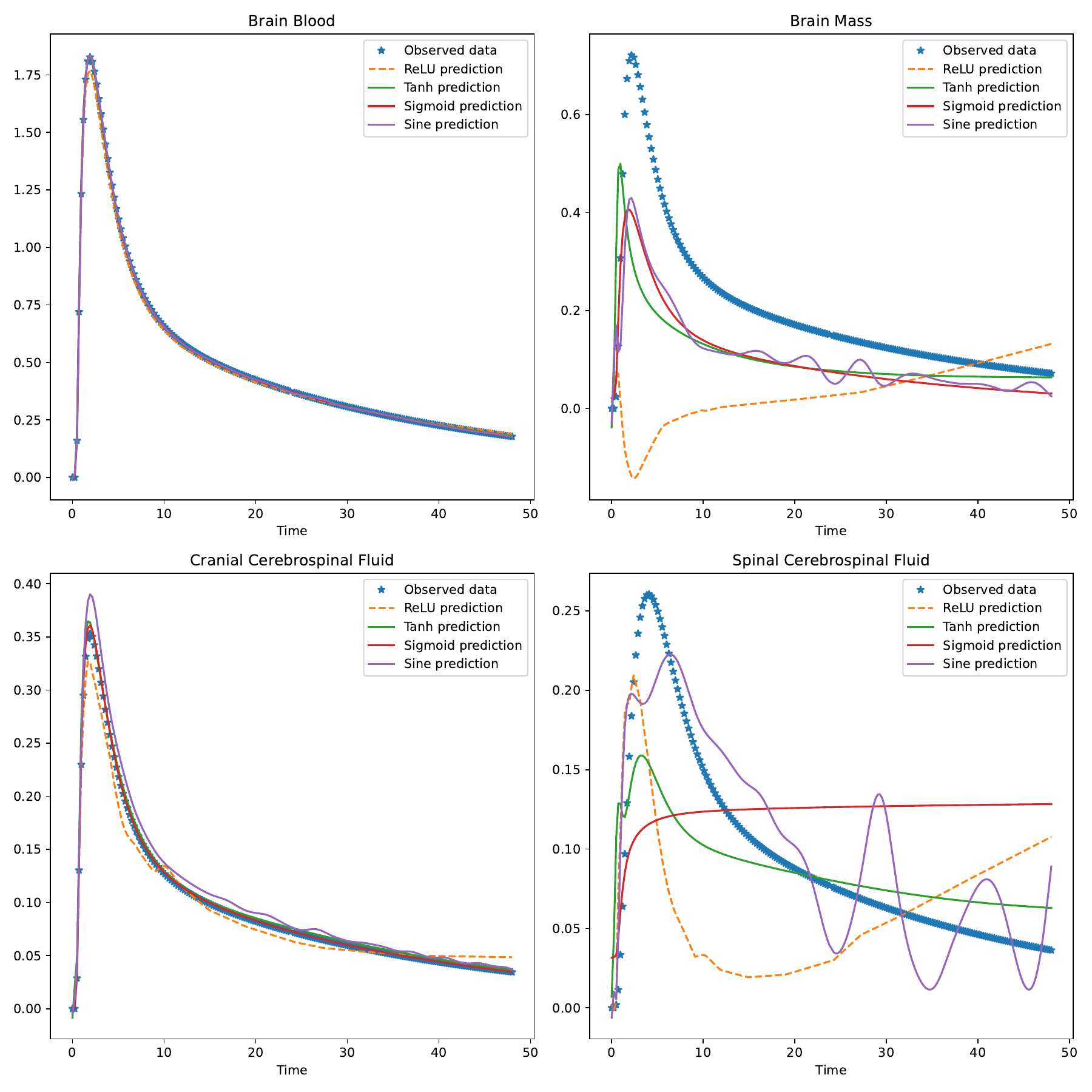}
\begin{tiny}
\caption{Estimated drug concentration profiles of brain compartments for different activation functions.}\label{allfig}
\end{tiny}
\end{figure}

\begin{table}[htbp]
\centering
\caption{The total loss and the training time for different number of layers (L), activation functions (AF) and neurons (N).}\label{tab:4}
\footnotesize
\begin{tabular}{c l c c c c}
\hline
\textbf{L} & \textbf{AF} & \multicolumn{4}{c}{\textbf{Neurons(N)}} \\
\hline
 & & \textbf{N=9} & \textbf{N=18} & \textbf{N=27} & \textbf{N=50} \\
\hline
\multirow{4}{*}{1} & ReLU    & 1.56e+04 (161) & 1.56e+04 (165) & 1.56e+04 (171) & 6.69e+02 (264) \\
 & Tanh    & 1.37e-01 (199) &  4.58e-02 (210) &  4.99e-02 (222) & 3.97e-02 (307) \\
 & Sigmoid & 1.77e-01 (171) &  5.67e-02 (173) &  2.96e-02 (187) &  4.05e-02 (327)\\
 & Sine     & 9.73e+02 (216) &  5.49e+02 (236) &  5.07e+02 (237) &  5.41e+02 (368)\\
\hline
\multirow{4}{*}{2} & ReLU    &  6.43e+02 (194) &  7.47e+01 (217) &  1.55e+04 (232) &  3.95e+01 (374)\\
 & Tanh    &  6.67e-02 (284)&  6.82e-02 (257) &  3.44e-02 (284) &  4.57e-02 (377)\\
 & Sigmoid &  1.03e-01 (196) &  2.44e-02 (231) &  5.83e-02 (240) &  4.24e-02 (438)\\
 & Sine     &  7.21e+00 (315) &  1.52e+00 (342) &  8.50e-01 (293) &  3.95e+00 (656)\\
\hline
\multirow{4}{*}{6} & ReLU    &  1.53e+00 (319) &  2.78e+00 (372) &  1.19e-01 (405) &  1.50e-01(659)\\
 & Tanh    &  4.68e-02 (435)& 5.39e-02 (515) &  3.91e-02 (615) & 4.20e-02 (1017)\\
 & Sigmoid &  1.40e+04 (396) &  8.59e-02 (451) &  5.30e-02 (527) &  6.15e-02 (1200)\\
 & Sine     &  5.69e-02 (571) &  3.16e-02 (573) & 4.24e-02 (724) &  4.57e-02 (1393)\\
\hline
\end{tabular}
\end{table}

\textbf{Example 2}. In this example, we consider estimating parameters up to six parameters:  \( V_{\mathrm{bb}} \), \( V_{\mathrm{bm}} \), \( V_{\mathrm{ccsf}} \), \( V_{\mathrm{scsf}} \), \( fu_{\mathrm{bb}} \), and \( \lambda_{\mathrm{ccsf}} \) though any other parameter set could also be estimated if they are structurally identifiable as shown in the case 5  in Table (\ref{tab:3}). However, selecting a large number of parameters comes at the cost of increased \textit{computational expense}.
First, we present the results of the practical identification analysis.  The correlation matrix derived from the Fisher Information Matrix as shown in Table (\ref{tab:5}) indicates that most parameter pairs exhibit low to moderate correlations. In particular, the absolute values of the majority of correlation coefficients are below 0.26, suggesting weak linear dependence between parameters and good practical identifiability. The highest correlation is observed between \( fu_{\mathrm{bb}} \), and \( \lambda_{\mathrm{ccsf}} \)  (0.4646), which indicates a moderate positive association but is not sufficiently strong to suggest severe practical non-identifiability. The high correlation between \( fu_{\mathrm{bb}} \), and \( \lambda_{\mathrm{ccsf}} \) compared to the other parameters is able to explain the high absolute errors of \( fu_{\mathrm{bb}} \), and \( \lambda_{\mathrm{ccsf}} \) as shown in the Table (\ref{tab:6}). Overall, the absence of strong correlations (e.g., $|r| > 0.8$ ) implies that the selected parameters can be estimated with acceptable reliability from the available data.

\begin{table}[h!]
\centering
\caption{Correlation matrix of estimated parameters}\label{tab:5}
\begin{tabular}{lcccccc}
\hline
 & Vbb & Vbm & Vccsf & Vscsf & fubb & lamccsf \\
\hline
Vbb      & 1        &          &           &           &           &           \\
Vbm      & -0.2542  & 1        &           &           &           &           \\
Vccsf    & 0.05475  & -0.1292  & 1         &           &           &           \\
Vscsf    & 0.05873  & -0.02423 & -0.151    & 1         &           &           \\
fubb     & -0.1456  & 0.1764   & -0.009497 & -0.1639   & 1         &           \\
lamccsf  & -0.02003 & -0.08028 & 0.1277    & -0.01126  & 0.4646    & 1         \\
\hline
\end{tabular}
\label{tab:correlation_matrix}
\end{table}

From Table (\ref{tab:4}) we found that there are several occasions that the error (total loss) is of the magnitude of $10^{-2}$ after 10000 iterations. Among these cases we selected 
"tanh" activation function with Glorot normal initializer for initial weight and bias generator. The selected neural network archtechchor contains 6 layers and 50 neurons per layers since this selection shows the closer approximation to the reference parameter values as summarized in the Table (\ref{tab:4a}) after 10000 iterations. 

\begin{table}[h!]
\centering
\caption{Selection of the best neural network architecture after 10,000 iterations}
\label{tab:4a}
\begin{tabular}{lccc}
\hline
 & Case 1: (6, 50, tanh) & Case 2: (6, 18, sine) & Case 3: (1, 27, sigmoid)  \\
\hline
Vbb      & 0.042984428107738494  & 0.013211678266525268  & 0.022292330414056777 \\
Vbm      & 0.704248666763305664  & 0.617162728309631347 & 0.730870246887207031 \\
Vccsf    & 0.093138557821512222  & 0.057842567563056945  & 0.060063321143388748 \\
Vscsf    & 0.015588552691042423  & 0.010563577413558959  & 0.011213994637131690 \\
fubb     & 0.091953142285346984  & 0.063309334218502044  & 0.072771822810173034 \\
lamccsf  & 0.019866755947470664 & 0.011166308075189590  & 0.014584582298994064\\
\hline
\end{tabular}
\label{tab:correlation_matrix}
\end{table}

To further improve the solution we then trained the network for 5 million iterations, which took 54977.47 seconds to complete. After training the model with adam's optimizer with the learning rate of 0.0001 we further train the model with Limited-memory Broyden-Fletcher-Goldfarb-Shanno optimizer for further smoothness and the best model is found at the 5000018 iteration with total loss of $1.23e-05$. As evidence in Table (\ref{tab:6}) it can be seen that the parameters \( V_{\mathrm{bb}} \), \( V_{\mathrm{bm}} \), \( V_{\mathrm{ccsf}} \) and \( V_{\mathrm{scsf}} \) were trained with very high accuracy. The estimated parameter values of  \( fu_{\mathrm{bb}} \), and \( \lambda_{\mathrm{ccsf}}\) are physiologically acceptable and accurate enough. However, based on the level of accuracy that we expect the model can be further trained.

\begin{table}[htbp]
\centering
\def\arraystretch{1.0}
\caption{Comparison of reference parameter values, the model estimated parameter values, absolute errors, and 95\% confidence intervals (CIs).} 
\label{tab:6}
\tabcolsep=6pt
\footnotesize
\begin{tabular}{|>{\RaggedRight}p{0.15\linewidth}|c|c|c|c|}
\hline
\textbf{Parameter} & \textbf{Reference Value} & \textbf{PINN Value} & \textbf{Absolute Error} & \textbf{95\% CI} \\
\hline
$V_{bb}$ & 0.064952435 & 0.064952425 & $9.34 \times 10^{-9}$ & $[0.064816,0.065088]$ \\
\hline
$V_{bm}$ & 1.104115461 & 1.104115366 & $9.41 \times 10^{-8}$ & $[1.104095,1.104135]$ \\
\hline
$V_{ccsf}$ & 0.103984624 & 0.103984609 & $1.48 \times 10^{-8}$ & $[0.103723,0.104246]$ \\
\hline
$V_{scsf}$ & 0.025996156 & 0.025996146 & $9.28 \times 10^{-9}$ & $[0.025931,0.026062]$ \\
\hline
$fu_{bb}$ & 0.125 & 0.128736447 & $3.74 \times 10^{-3}$ & $[0.111186,0.146286]$ \\
\hline
$\lambda_{ccsf}$ & 0.026 & 0.021295592 & $4.70 \times 10^{-3}$ & $[0.012936, 0.029656]$ \\
\hline
\end{tabular}
\end{table}

In terms of improving the results through the loss weights, we manually, set the weights (instead of letting network to learn the loss weights) for $\lambda_{IC}$, $\lambda_{ODE}$ and $\lambda_{Data}$ by observing the predictions and comparing the loss curves. In this way, we found that $\lambda_{IC}=(1, 1, 1, 1)$, $\lambda_{ODE} = (2, 2, 2, 2)$, and $\lambda_{Data}=(3, 3, 3, 3)$ yielded the best results in our experiments.

Figure (\ref{pco}) illustrates the evolution and stabilization of the estimated parameter values over training epochs. The training was initialized with all parameter values set to zero, and a Sigmoid transformation was applied to each parameter to ensure positivity throughout the optimization process.

\begin{figure}[H]
\centering
\includegraphics[width=14.0cm]{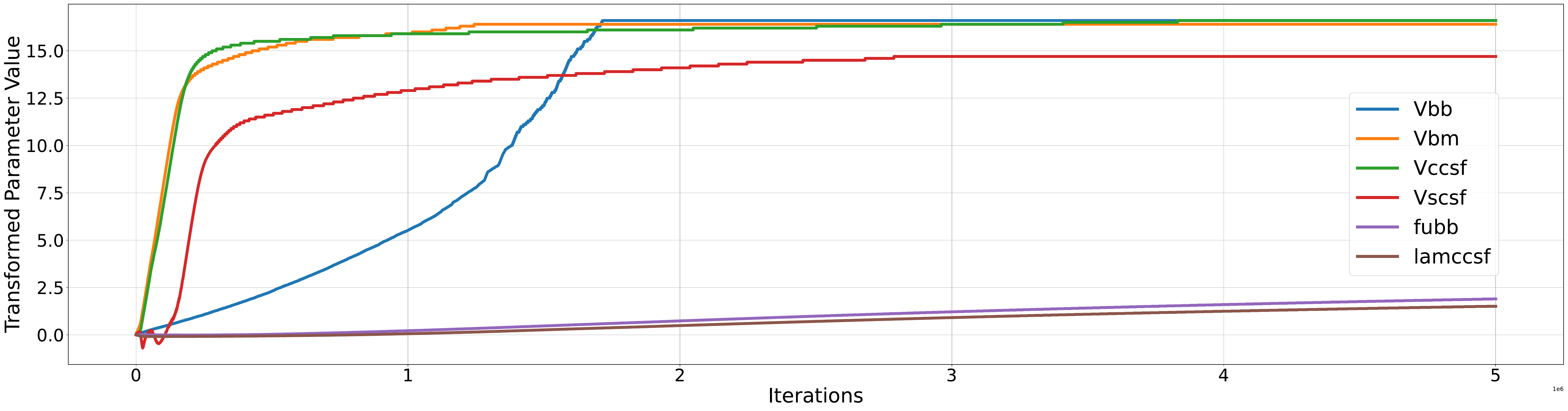}
\begin{tiny}
\caption{Stabilization of estimated parameters with sigmoid transformation over epochs.}\label{pco}
\end{tiny}
\end{figure}

Figure (\ref{dp}) presents the ODE loss, initial condition loss, data loss, and total loss over training epochs for the optimal weight values of $\lambda_{Data}, \lambda_{ODE},$ and $\lambda_{IC}$. The initial condition loss remains consistently smaller than both the ODE loss and data loss, as the initial values of the ODE state variables are identical to those of the observed data. As shown in Figure (\ref{dp}), the loss function continues to improve even after 5 million iterations. Nevertheless, by this point the model achieves the desired level of accuracy in terms of both parameter estimation and the individual fits of the drug concentration profiles across compartments. 

\begin{figure}[H]
\centering
\includegraphics[width=14.0cm]{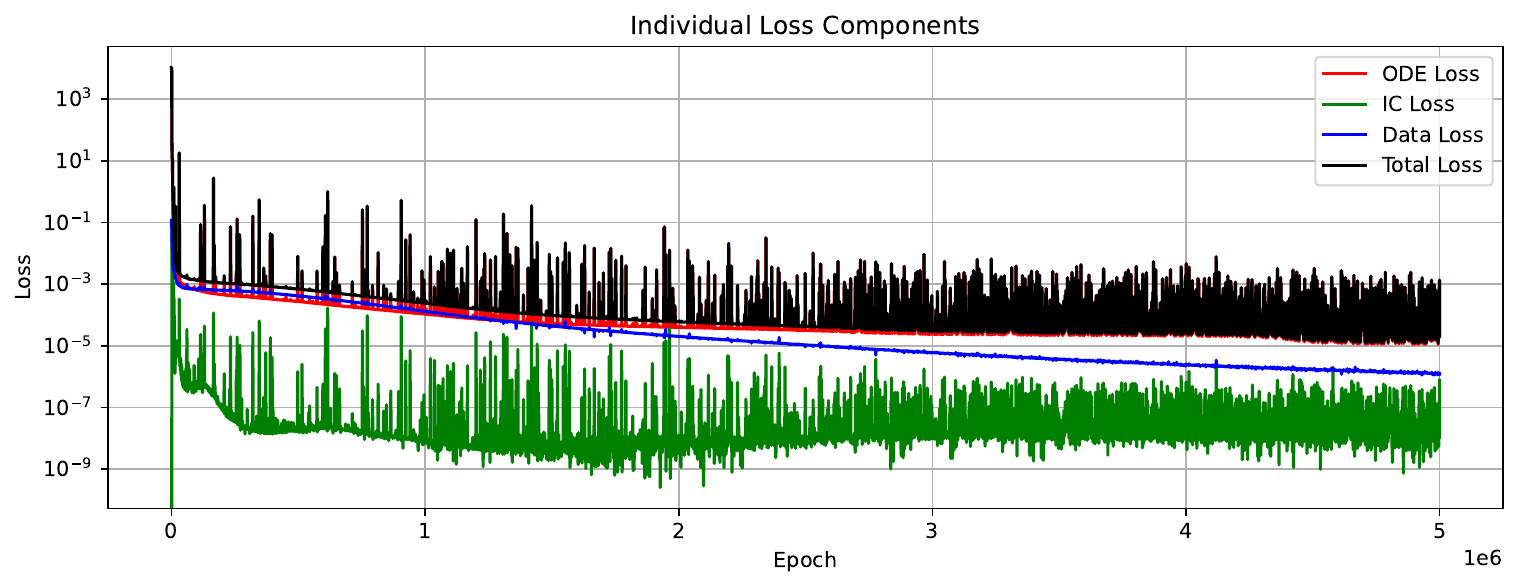}
\begin{tiny}
\caption{Minimization of weighted log(Loss) components over epoch.}\label{dp}
\end{tiny}
\end{figure}

Figure (\ref{figall}) reports the forward solution of the ODE system using the estimated parameters. The resulting drug concentration profiles in brain blood ($C_{bb}$), brain mass ($C_{bm}$), cranial CSF ($C_{ccsf}$), and spinal CSF ($C_{scsf}$) exhibit close agreement with the observed data, consistent with the behavior of the data loss component.

\begin{figure}[H]
\centering
\includegraphics[width=15.0cm]{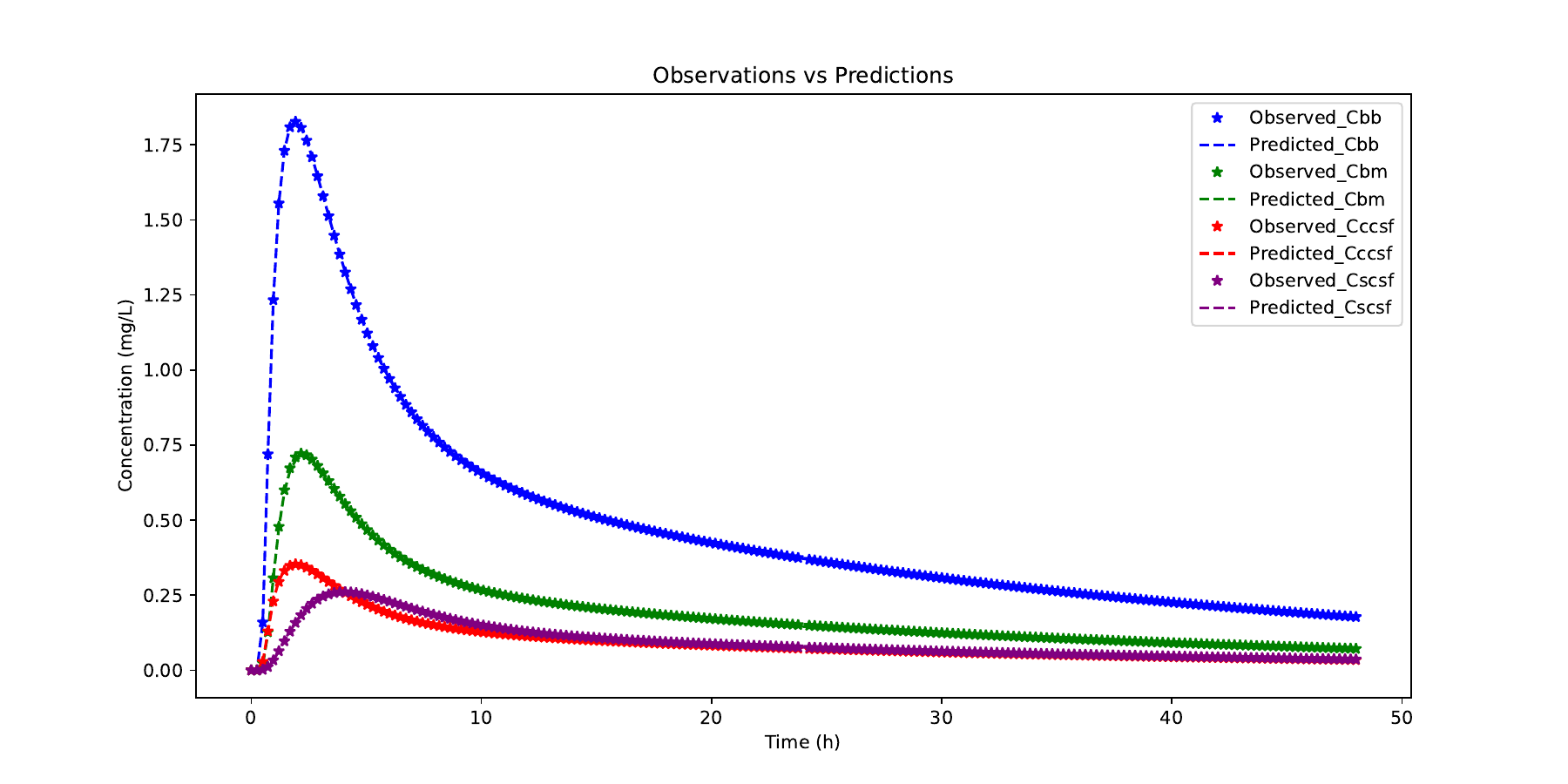}
\begin{tiny}
\caption{Drug concentration vs time for observed and predicted concentrations.}\label{figall}
\end{tiny}
\end{figure}

\textbf{Example 3}. 
In this example, we compare the prediction accuracy of our PBPK-iPINN approach by comparing it with traditional statistical and numerical methods. First, the Stochastic Approximation Expectation-Maximization (SAEM) algorithm \cite{chan2011use}, implemented in MONOLIX, was used to estimate model parameters. The forward problem was then solved using explicit Runge-Kutta (RK) schemes. Second, the Differential Evolution (DE) algorithm \cite{ardia2016package}, implemented in R, was used for parameter estimation. The forward problem was subsequently solved using the Livermore Solver for Ordinary Differential Equations with Adaptive step-size control, which automatically switches between a non-stiff method (Adams) and a stiff method (Backward Differentiation Formula, BDF) depending on problem stiffness. Physiologically meaningful parameter bounds were applied across all methods to constrain the parameter space and accelerate convergence. For a fair comparison, identical parameter bounds were defined for the PBPK-iPINN, SAEM, and DE algorithms. Absolute errors (for example, $| \text{PINN}_{\text{err}} | = | \text{Reference Value} - \text{PINN} |$), reported in Table (\ref{tab:8}), were calculated relative to the reference parameter values and rounded to five decimal places.

The results of the SAEM algorithm (implemented in MONOLIX) were obtained after 2000 iterations with 1000 Monte Carlo importance sampling steps. The simulation took 3421 seconds to complete, with the stopping criterion set to a tolerance of $1 \times 10^{-8}$. Similarly, the results of the DE algorithm (implemented in R) were obtained after 2000 iterations, with the relative tolerance set to $1 \times 10^{-8}$, and required 4083 seconds to complete. All simulations were performed on a MacBook Air (M1, 2020) with 8 GB RAM. The PINN simulation was completed as described in Example~2. Although PINN is computationally expensive, PBPK-iPINN demonstrates promising potential as an alternative to traditional numerical and statistical methods for addressing complex inverse problems, particularly those involving highly stiff systems.

\begin{table}[H] 
\centering
\def\arraystretch{1.0}
\caption{Comparison of estimated parameter values with absolute errors.} \label{tab:8}
\tabcolsep=4pt
\begin{tabular}[c]{|c|c|c|c|c|c|c|}
\hline
Parameter & PINN & SEAM & DE & $|{\rm PINN_{err}}|$ & $|SEAM_{err}|$ & $|{DE_{err}}|$ \\
\hline
$V_{bb}$        & 0.06495 & 0.06500 & 0.06497 & 0.00000 & 0.00005 & 0.00002 \\
\hline
$V_{bm}$        & 1.10412 & 1.10000 & 1.10347 & 0.00000 & 0.00412 & 0.00065 \\
\hline
$V_{ccsf}$      & 0.10398 & 0.10000 & 0.10455 & 0.00000 & 0.00398 & 0.00057 \\
\hline
$V_{scsf}$      & 0.02599 & 0.02600 & 0.02600 & 0.00000 & 0.00001 & 0.00001 \\
\hline
$fu_{bb}$       & 0.12874 & 0.12000 & 0.12500 & 0.00374 & 0.00500 & 0.00000 \\
\hline
$\lambda_{ccsf}$ & 0.02600 & 0.02700 & 0.02600 & 0.00000 & 0.00100 & 0.00000 \\
\hline
\end{tabular}
\end{table}

In Figure (\ref{fig:combined}), we overlay the drug concentration profiles obtained from SAEM, DE, and PINN with the reference solution to visually assess the approximation capability of PINN. The results demonstrate that the PINN solution approximates the true solution equally well or better than the other methods.

\begin{figure}[H]
    \centering
    \begin{subfigure}[b]{0.40\textwidth}
        \centering
        \includegraphics[width=\linewidth]{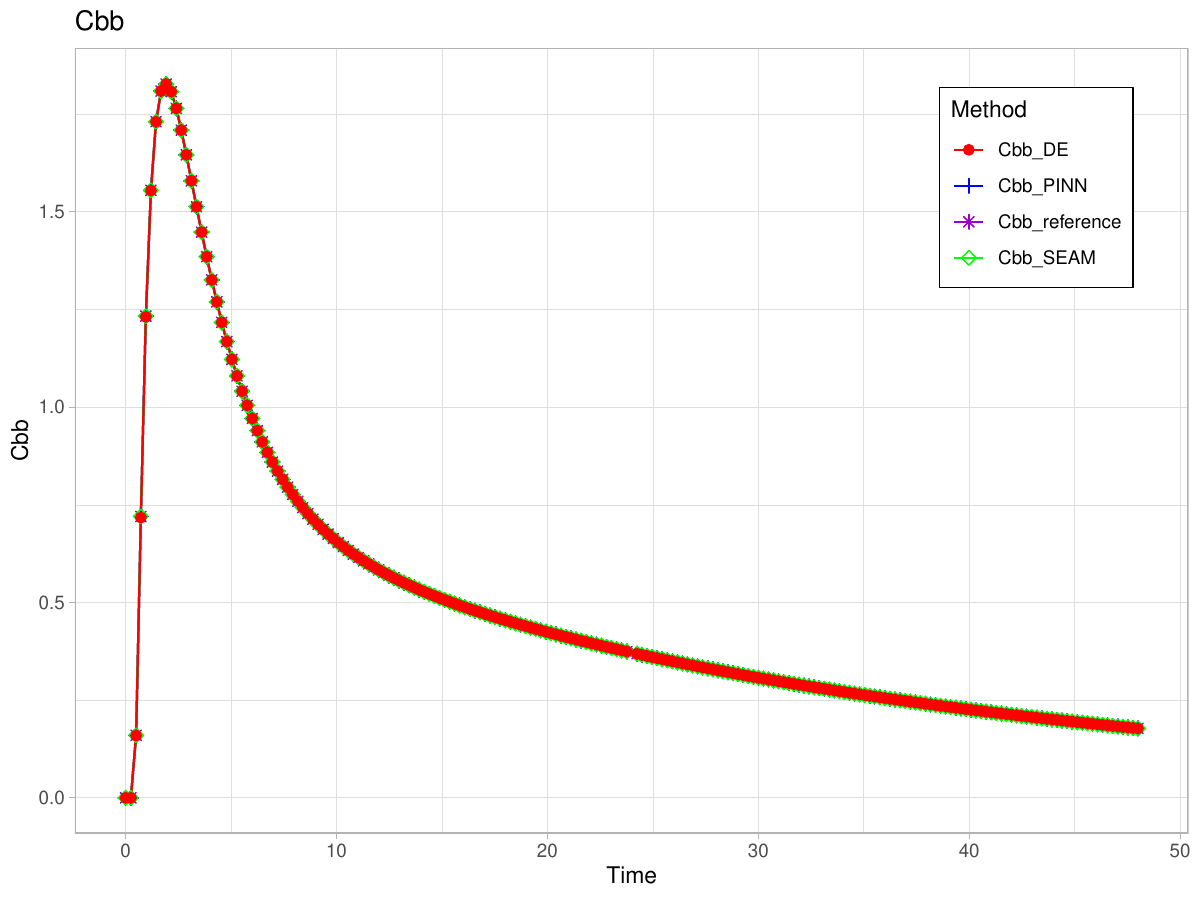}
        \caption{Cbb profiles.}
        \label{fig:sub1a}
    \end{subfigure}
    \hfill
    \begin{subfigure}[b]{0.40\textwidth}
        \centering
        \includegraphics[width=\linewidth]{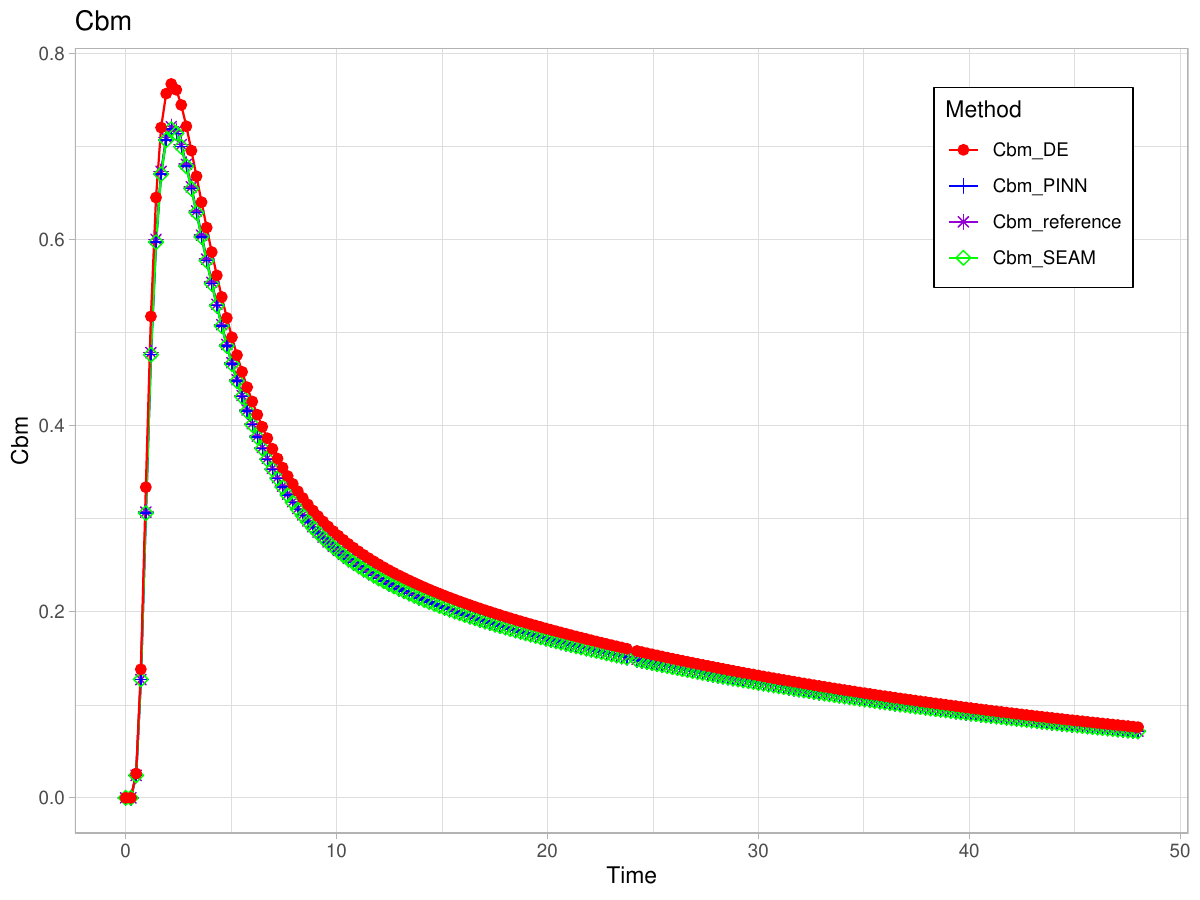}
        \caption{Cbm profiles.}
        \label{fig:sub2a}
    \end{subfigure}
    
    \vspace{0.5cm} 
    
    \begin{subfigure}[b]{0.40\textwidth}
        \centering
        \includegraphics[width=\linewidth]{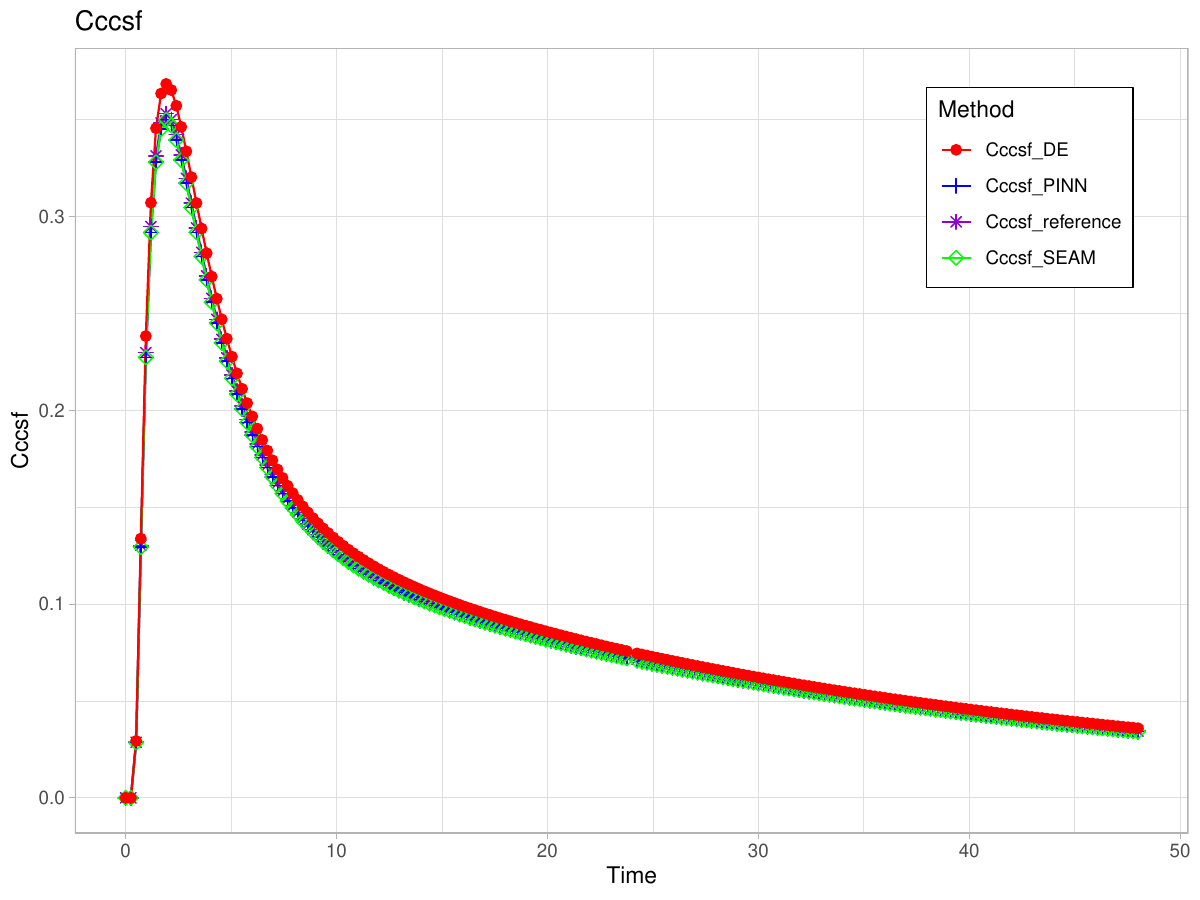}
        \caption{Cccsf profiles.}
        \label{fig:sub3a}
    \end{subfigure}
    \hfill
    \begin{subfigure}[b]{0.40\textwidth}
        \centering
        \includegraphics[width=\linewidth]{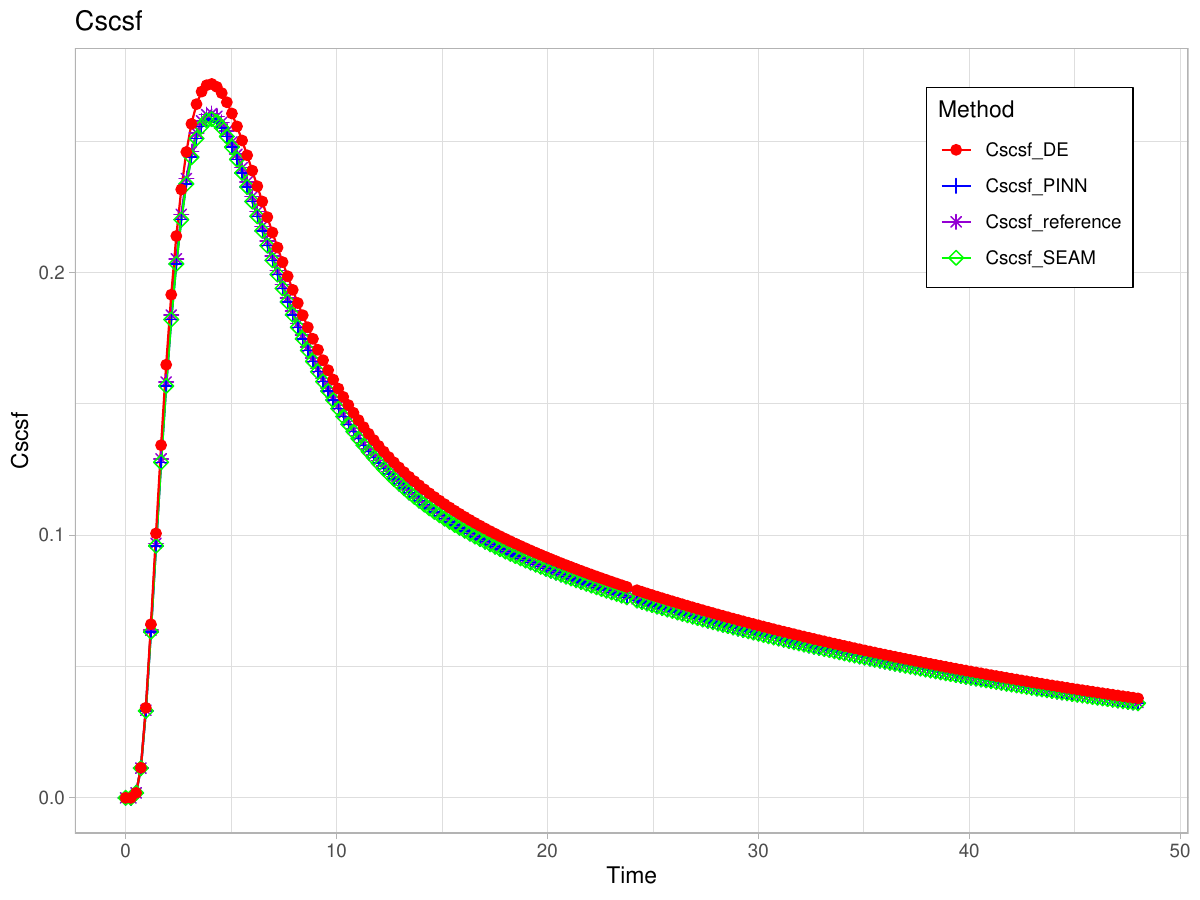}
        \caption{Cscsf profiles.}
        \label{fig:sub4a}
    \end{subfigure}
    
    \caption{Concentration - time profiles of the 4-compartments obtained from PINN, SEAM, DE, algorithms and using the reference parameter values.}
    \label{fig:combined}
\end{figure}

\section{Conclusion}
To the best of our knowledge, research on inverse physics-informed neural networks (iPINNs) applied to physiologically based pharmacokinetic (PBPK) modeling is limited, and no study has applied iPINNs to the 4-compartment, permeability-limited PBPK brain model used in this work for estimating drug- and patient-specific parameters and predicting drug concentration profiles. 

The PBPK-iPINN framework provides a robust, efficient, and accurate alternative for estimating parameters and drug concentration profiles. The implications of this approach are twofold. First, it offers a powerful tool for analyzing a wide range of compounds within the established 4-compartment brain model, streamlining drug development and research. Second, the framework is adaptable to any multi-compartment PBPK model, existing or future, where in vitro experimentation or traditional computational approaches (e.g., numerical or statistical algorithms) are inadequate. This study helps determine AUC, Tmax, and Cmax through accurate estimation of parameters and concentration profiles, which could assist drug developers and healthcare providers in developing more effective drugs and optimizing the use of current therapies to treat brain cancer.

\textbf{Limitations and Future Work}

Experimentally determining drug-specific or patient-specific parameters can be a complex and time-consuming process due to challenges in sampling and limitations of analytical technologies (e.g., in vitro experiments). Furthermore, existing numerical and statistical algorithms often fail to converge when the system of ODEs becomes highly stiff. Since this study recommends PBPK-iPINN as an alternative to conventional numerical and statistical methods, we plan to apply the same methodology described in this paper to metabolic flux analysis (MFA) in future work. MFA consists of a large number of differential equations (more than 60 ODEs) and more than 80 parameters to be estimated, where we have already observed that traditional numerical and statistical methods fail to converge.

\section*{Conflict of Interest}
The authors declare no conflict of interest.

\section*{Author Contributions}
C.W. wrote the manuscript; C.W. designed the research; C.W., K.W., and P.R. performed the PINN simulations; C.W. and N.H. analyzed the data and conducted the parameter identifiability analysis.

\section*{Acknowledgments}
No funding was received for this research.

\section*{Data Availability}
The Python codes used with the DeepXDE library and the drug concentration time profiles generated through Simcyp simulator are available upon request.

\end{document}